\newcommand{\multiline}[1]{%
    \begin{tabularx}{\dimexpr\linewidth-\ALG@thistlm}[t]{@{}X@{}}
        #1
    \end{tabularx}
}
\newtheorem{theorem}{Theorem}
\newtheorem{lemma}[theorem]{Lemma}
\newtheorem*{theorem*}{Theorem}
\newcolumntype{P}[1]{>{\RaggedRight\hspace{0pt}}p{#1}}
\newcolumntype{X}[1]{>{\RaggedRight\hspace*{0pt}}p{#1}}
\colorlet{linecol}{black!75}
\newcommand{\highlight}[2]{\colorbox{#1!17}{$\displaystyle #2$}}
\colorlet{mhpurple}{Plum!80}
\renewcommand{\highlight}[2]{\colorbox{#1!17}{#2}}
\def\*#1{\mathbf{#1}}
\newcommand{\bx}{\mathbf{x}}
\newcommand{\methodname}{SA-FAS }
\crefname{section}{Sec.}{Secs.}
\Crefname{section}{Section}{Sections}
\Crefname{table}{Table}{Tables}
\crefname{table}{Tab.}{Tabs.}
\begin{document}

%%%%%%%%% TITLE - PLEASE UPDATE
\title{Rethinking Domain Generalization for Face Anti-spoofing: \\ Separability and Alignment}

\author{\\
Yiyou Sun$^{2}$\thanks{This work was done during Yiyou Sun’s internship at Google.}, Yaojie Liu$^1$, Xiaoming Liu$^{1,3}$, Yixuan Li$^2$, Wen-Sheng Chu$^1$\\
$^1$Google Research, $^2$University of Wisconsin-Madison, $^3$Michigan State University\\
{\tt\small $^1$\{yaojieliu,xiaomingl,wschu\}@google.com, $^2$\{sunyiyou,sharonli\}@cs.wisc.edu, $^3$liuxm@cse.msu.edu}
% For a paper whose authors are all at the same institution,
% omit the following lines up until the closing ``}''.
% Additional authors and addresses can be added with ``\and'',
% just like the second author.
% To save space, use either the email address or home page, not both
}
\maketitle

\def\eqnvspace{{\vspace{-2mm}}}
\def\tabvspace{{\vspace{-1mm}}}
\def\figvspace{{\vspace{-3mm}}}
\newcommand{\norm}[1]{\left\lVert#1\right\rVert}

\newcommand{\Paragraph}[1]{\vspace{1mm} \noindent \textbf{#1} \hspace{0mm}}
\newcommand{\Section}[1]{\vspace{-2mm} \section{#1} \vspace{-1mm}}
\newcommand{\SubSection}[1]{\vspace{-1mm} \subsection{#1} \vspace{-1mm}}
\newcommand{\SubSubSection}[1]{\vspace{-1mm} \subsubsection{#1} \vspace{-1mm}}

\newlength\segsep
\setlength{\segsep}{-0.5ex}

%%%%%%%%%%%%%%%%%%%%%%%%%%%%%%%%%%%% ABSTRACT  %%%%%%%%%%%%%%%%%%%%%%%%%%%%%%%%%%%%
\begin{abstract}
This work studies the generalization issue of face anti-spoofing (FAS) models on domain gaps, such as image resolution, blurriness and sensor variations. Most prior works regard domain-specific signals as a negative impact, and apply metric learning or adversarial losses to remove them from feature representation. 
Though learning a domain-invariant feature space is viable for the training data, we show that the feature shift still exists in an unseen test domain, which backfires on the generalizability of the classifier. 
In this work, instead of constructing a domain-invariant feature space, we encourage domain separability while aligning the live-to-spoof transition (i.e., the trajectory from live to spoof) to be the same for all domains. 
We formulate this FAS strategy of separability and alignment (SA-FAS) as a problem of invariant risk minimization (IRM), and learn domain-variant feature representation but domain-invariant classifier. We demonstrate the effectiveness of SA-FAS on challenging cross-domain FAS datasets and establish state-of-the-art performance. Code is available at \small{\url{https://github.com/sunyiyou/SAFAS}}.
\end{abstract}

%%%%%%%%% BODY TEXT

\Section{Introduction}
\label{sec:intro}
\vspace{\segsep}

% FAS
Face recognition (FR)~\cite{deng2019arcface} has achieved remarkable success and has been widely employed in mobile access control and electronic payments. Despite the promise, FR systems still suffer from presentation attacks (PAs), including print attacks, digital replay, and 3D masks. As a result, face anti-spoofing (FAS) has been an important topic for almost two decades~\cite{yang2014learn,liu2019deep,wang2022patchnet,atoum2017face,liu2018learning,yu2020face, kim2019basn}. 

% \input{subtex/figure_prob}
%\iffalse
\begin{figure}[t]
\small\centering
\includegraphics[width=0.45\textwidth]{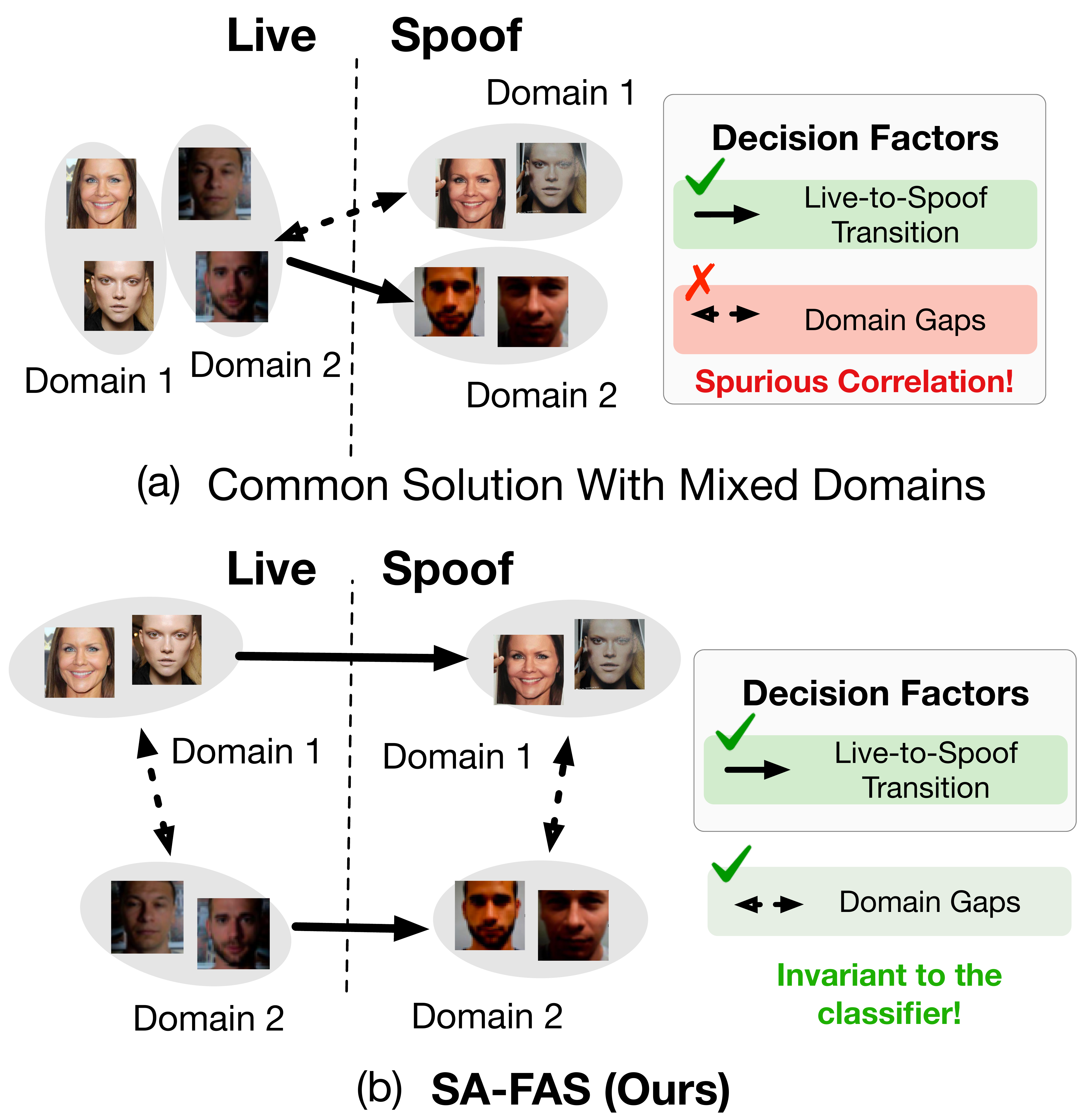}
\caption{\small
{\bf Cross-domain FAS:}
(a) Common FAS solutions aim to remove domain-specific signals and mix domains in one cluster. However, we empirically show domain-specific signals still exists in the feature space, and model might pick domain-specific signals as spurious correlation\protect\footnotemark~for classification.
(b) Our SA-FAS aims to retain domain signal. Specifically, we train a feature space with two critical properties: (1) \textbf{Separability}: Samples from different domains and
live/spoof classes are well-separated; (2) \textbf{Alignment}: Live-to-spoof transitions are aligned in the same direction for all domains. With these two properties, our method keeps the domain-specific signals invariant to the decision boundary.}
\label{fig:teaser}
\figvspace
\end{figure}

\footnotetext{In statistics, spurious correlation is a mathematical relationship in which multiple events or variables are associated but not causally related.}

\iffalse
\begin{figure*}[t]
\small\centering
\includegraphics[width=1\textwidth]{figs/teaser.pdf}
\caption{\small(a) Common FAS solutions aim to remove domain-specific signals and mix domains in one cluster. However, we empirically show domain-specific signals still exists in the feature space, and model might even pick some domain-specific signals as spurious correlation for decision boundary; (b) Our SA-FAS aims to retain domain signal. By separating data from different domains and aligning each domain's spoofing degradation, our method can keep the domain-specific signals invariant to the decision boundary.}
\label{fig:teaser}
\figvspace
\end{figure*}
\fi

% From intra-domain to cross-domain
In early systems like building access and border control with limited variations (\eg, lighting and poses), simple methods~\cite{boulkenafet2015face,freitas2012lbp,li2016original} have exhibited promise. These algorithms are designed for the closed-world setting, where the camera and environment are assumed to be the same between train and test. This assumption, however, rarely holds for in-the-wild applications, \eg, mobile face unlock and sensor-invariant ID verification.
Face images in those FAS cases may be acquired from wider angles, complex scenes, and different devices, where it is hard for training data to cover all the variations.
These differences between training and test data are termed domain gaps  and the FAS solutions to tackle the domain gaps are termed cross-domain FAS.

% Cross-domain FAS
Learning domain-invariant representation is the main approach in generic domain generalization~\cite{wang2022generalizing}, and has soon been widely applied to cross-domain FAS~\cite{wang2019improving,shao2019multi,jia2020ssdg,liu2021dual,liu2021adaptive,wang2022ssan}.
Those methods consider domain-specific signals as a confounding factor for model generalization, and hence aim to remove domain discrepancy from the feature representation partially or entirely. 
Adversarial training is commonly applied so that upon convergence the domain discriminator cannot distinguish which domain the features come from.
In addition, some methods apply metric learning to further regularize the feature space, \eg,~triplet loss~\cite{wang2019improving}, dual-force triplet loss~\cite{shao2019multi}, and single-side triplet loss~\cite{jia2020ssdg}.

% Cross-domain FAS Challenges
% 
There are two crucial issues that limit the generalization ability of these methods~\cite{wang2019improving,shao2019multi,jia2020ssdg,liu2021dual,liu2021adaptive,wang2022ssan} with domain-invariant feature losses. First, 
these methods posit a strong assumption that the feature space is perfectly domain-invariant after removing the domain-specific signals from training data. 
However, this assumption is unrealistic due to the limited size and domain variants of the training data, on which the loss might easily overfit during training. As shown in Fig.~\ref{fig:cmp_da_umap}, the test distribution is more expanded compared to the training one, and the spatial relation between live and spoof has largely deviated from the learned classifier.
Second, feature space becomes ambiguous when domains are mixed together. Note that the domain can carry information on certain image resolutions, blurriness and sensor patterns.
If features from different domains are collapsed together~\cite{papyan2020prevalence}, the live/spoof classifier will undesirably leverage spurious correlations to make the live/spoof predictions as shown in Fig.~\ref{fig:teaser} (a), \eg, comparing live from low-resolution domains to spoof from high-resolution ones. Such a classifier will unlikely generalize to a test domain when the correlation does not exist.

%
 
% Our solution
In this work, we rethink feature learning for cross-domain FAS. Instead of constructing a domain-invariant feature space, we aim to find a generalized classifier while explicitly maintaining domain-specific signals in the representation. Our strategy can be summarized by the following two properties:
\begin{itemize}
    \tabvspace\item \textbf{Separability:} We encourage features from different domains and live/spoof classes to be separated which facilitates maintaining the domain signal.
    %which facilitates the learning of a domain-invariant decision boundary as shown in Fig.~\ref{fig:teaser} (right bottom). 
    According to~\cite{ben2006analysis}, representations with well-disentangled domain variation and task-relevant features are more general and transferable to different domains. 
    \tabvspace\item \textbf{Alignment:} Inspired by~\cite{jourabloo2018face}, we regard spoofing as the process of transition. For similar PA types\footnote{This work focuses on print and replay attacks.}, the transition process would be similar, regardless of environments and sensor variations. With this assumption, we regularize the live-to-spoof transition to be aligned in the same direction for all domains. 
\end{itemize}
We refer to this new learning framework as \textit{FAS with separability and alignment} (dubbed \textbf{SA-FAS}), shown in Fig.~\ref{fig:teaser} (b).
To tackle the separability, we leverage Supervised Contrastive Learning (SupCon)~\cite{2020supcon} to learn representations that force samples from the same domain and the same live/spoof labels to form a compact cluster. 
To achieve the alignment, we devise a novel Projected Gradient optimization strategy based on Invariant Risk Minimization (PG-IRM) to regularize the %live/spoof decision factors 
live-to-spoof transition invariant to the domain variance. 
With normalization, the feature space is naturally divided into two symmetric half-spaces: one for live and one for spoof (see Fig.~\ref{fig:umap}).
Domain variations will manifest inside the half-spaces but %be parallel 
have minimal impact to the live/spoof classifier. 

We summarize our contributions as three-fold:
\begin{compactitem}
    \item 
    We offer a new perspective for cross-domain FAS.
    Instead of removing the domain signal, we propose to maintain it and design the feature space based on separability and alignment;
    
    \item We first systematically exploit the domain-variant representation learning by combining contrastive learning and effectively optimizing invariant risk minimization (IRM) through the projected gradient algorithm for cross-domain FAS;
    
    \item We achieve state-of-the-art performance on widely-used cross-domain FAS benchmark, and provide in-depth analysis and insights on how separability and alignment lead to the performance boost.
\end{compactitem}

\Section{Related Work}
\label{sec:related}
\vspace{\segsep}

\Paragraph{Face Anti-Spoofing} 
Face anti-spoofing attracts growing attention in several thriving directions.
Early works exploit spontaneous human behaviors (\eg, eye blinking, head motion)~\cite{kollreider2007real, pan2007eyeblink} or predefined movements (\eg, head-turning, expression changes)~\cite{chetty2010biometric}. %These behavior-based strategy requires user interaction and are vulnerable to the video replaying attacks. 
%(2) Another line of approaches evolve into modeling material properties (\ie, texture). 
Later, hand-crafted features are utilized to describe spoof patterns, \eg, LBP~\cite{boulkenafet2015face, freitas2012lbp}, HoG ~\cite{freitas2012lbp,yang2013face}
and SIFT~\cite{patel2016secure} features. 
%, and train a live/spoof classifier using support vector machines or linear discriminant analysis. 
Recently, deep neural networks have been applied to face anti-spoofing.
There are classification-based methods~\cite{yang2014learn,liu2019deep,wang2022patchnet}, regression-based methods~\cite{atoum2017face,liu2018learning,yu2020face, kim2019basn}, and generative models~\cite{jourabloo2018face,liu2020disentangling,wang2022ssan,liu2022spoof}. 
In addition, the vision transformer also shows promising performance in tackling FAS~\cite{george2021effectiveness,huang2022adaptive}.
%and achieved state-of-the-art performance than conventional methods~\cite{yang2014learn,feng2016integration,li2016original,patel2016cross}.

% domain generalization
\Paragraph{Cross-domain FAS} 
 Recently, several works explore learning FAS models from multiple domains that generalize to unseen ones. 
Some methods~\cite{zhou2022generative, li2018unsupervised, wang2020unsupervised, guo2022multi,unified-detection-of-digital-and-physical-face-attacks} require data from the target domain to adapt the model (\ie, domain adaptation), while others~\cite{shao2019multi,kim2021suppressing,saha2020domain,jia2020ssdg,wang2022ssan,noise-modeling-synthesis-and-classification-for-generic-object-anti-spoofing} learn shared features based on adversarial training and triplet loss (\ie, domain generalization).
A few methods~\cite{shao2020regularized,chen2021generalizable,wang2021self} explore meta-learning to simulate the domain shift at training time. 
Most previous works regard the domain-specific signals as a negative impact. Contrastively, our paper first systematically exploits the explicit usage of domain-specific signals by invariant risk minimization in cross-domain FAS.

\Paragraph{Domain-invariant Classifier}
Learning a domain-invariant classifier has always been the focus of machine learning for decades~\cite{van2018cpc,chen2020simclr,caron2020swav,he2019moco} and is also one of the keys to the success of domain generalization. Along this line, kernel-based methods~\cite{blanchard2021domain,muandet2013domain,grubinger2015domain,gan2016learning,li2018domain,ghifary2016scatter} propose to learn a domain-invariant kernel from the training data. Domain adversarial learning~\cite{li2018domain,ganin2015unsupervised,ganin2016domain,gong2019dlow,li2018deep,shao2019multi,mahfujur2019correlation,wang2022ssan,jia2020ssdg} adversarially trains the generator and discriminator while the generator is trained to fool the discriminator to learn domain invariant feature representations. Recently, Invariant Risk Minimization (IRM) and its variants~\cite{arjovsky2019irm,ahuja2021ibirm,krueger2021rex,mahfujur2019correlation,mitrovic2020representation,choe2020empirical,sonar2021invariant} seek to directly enforce the optimal classifier on top of the representation space to be the same across all domains. However, IRM is known to be hard to optimize and can fail in non-linear optimization~\cite{kamath2021doesirm,rosenfeld2020riskirm}. In this paper, we propose an equivalent objective (PG-IRM) which is easier to optimize and achieve strong performance. % as shown in Sec.~\ref{sec:exp}.

%\paragraph{Invariant Risk Minimization (IRM)} 
%%%%%%%%%%%%%%%%%%%%%%%%%%%%%%%%%%%% SETUP  %%%%%%%%%%%%%%%%%%%%%%%%%%%%%%%%%%%%
\Section{Method}
\label{sec:method}
\vspace{\segsep}

We formally introduce the new learning framework, \textit{FAS with separability and alignment} (dubbed \textbf{SA-FAS}). The goal is to produce a feature space with two critical properties: 
(1) \textit{Separability}: We encourage samples from different domains and from different classes to be well-separated; 
(2) \textit{Alignment}: Live-to-spoof transition\footnote{The transition can be considered as a path in the high-D manifold. }  is aligned in the same direction for all domains.  
These two properties work jointly: 
\textit{separability} ensures the awareness of domain variance in the feature space; \textit{alignment} encourages the domain variance to be invariant to its live-vs-spoof hyperplane. 

This section is structured as follows:
Sec.~\ref{sec:setup} describes the problem setup, followed by the algorithm design of separability (Sec.~\ref{sec:sep}) and alignment (Sec.~\ref{sec:align}).
Finally, Sec.~\ref{sec:train_step} summarizes the training and inference processes.

\begin{figure*}[t]
    \small\centering
    \includegraphics[width=0.95\textwidth]{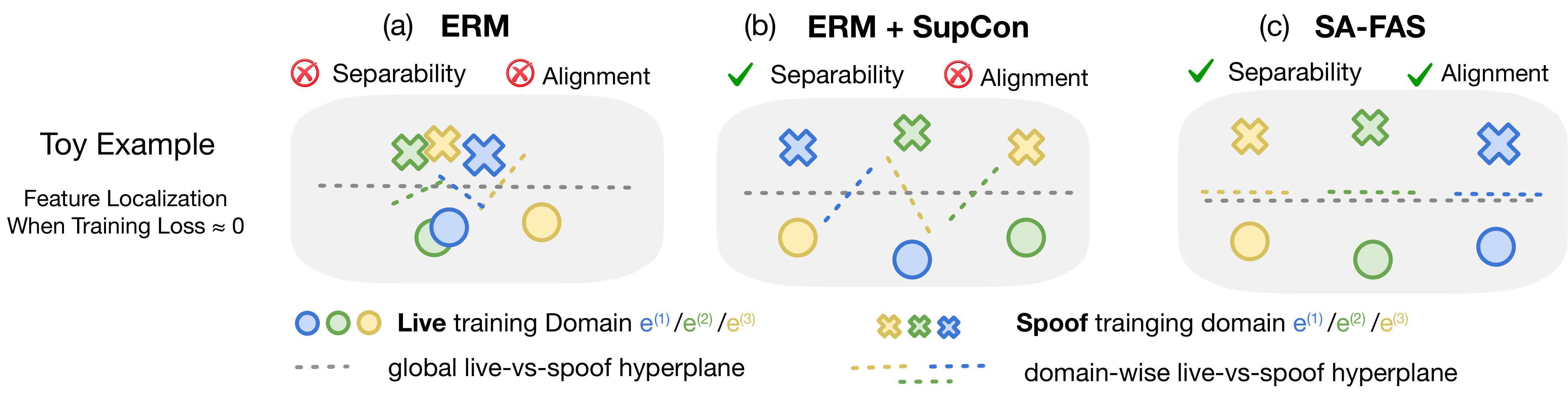}
    \caption{\small 
        {\bf Optimization objectives:}
        Illustration of feature space optimized by different objectives: (a) ERM, (b) ERM+SupCon, (c) SA-FAS (ours). 
        Circle/cross denotes live/spoof label; different colors indicate different domains. 
        A UMAP visualization for real data is provided in Appendix (Fig.~\ref{fig:method-sup}) to support the feature distribution shown in the toy example.
    }
    \label{fig:method}
\end{figure*}

%\protect\footnotemark
%\footnotetext{}

%

\SubSection{Problem Setup}
\label{sec:setup}
We start by defining the setting of the cross-domain FAS problem. We denote by $\mathcal{X}=\mathbb{R}^d$  the input space and 
$\mathcal{Y}=\{0 \text{ (live)}, 1 \text{ (spoof)}\}$ 
the output space. A learner is given access to a set of training data from $E$ domains $\mathcal{E} = \{e^{(1)}, e^{(2)}, .., e^{(E)}\}$ and is evaluated on test domain $e^*$. Let $e_i$ as the domain label for the $i$-th sample, we denote $\mathcal{D}=\{(\*x_i,y_i, e_i)\}_{i=1}^N$ drawn from an unknown joint data distribution $\mathcal{P}$ defined on $\mathcal{X} \times \mathcal{Y} \times \mathcal{E}$. 
Cross-domain FAS is a special binary classification problem to distinguish live and spoof faces from an unseen domain. The goal is to define a decision function:
\begin{align}
    f: \mathbf{x} \rightarrow \{0 \text{ (live)}, 1 \text{ (spoof)}\}, \nonumber
\end{align}
which classifies whether a sample $\*x$ from a new domain $e^*$ is live or spoof. 

In our network architecture, function $f$ consists of two components: (1) a deep neural network encoder $\phi:\mathcal{X} \rightarrow \mathbb{R}^m$ that maps the input $\bx$ to a $l_2$-normalized feature embedding $\*z = \phi(\bx)$; 
(2) a classifier (via a weight vector) $\beta:\mathbb{R}^m \rightarrow \mathbb{R}$ that maps the $m$-dimensional embedding $\*z$ to a scalar value, where a binary cross-entropy loss can be applied after using a sigmoid function. Because the true distribution of live/spoof data is unknown, the optimization commonly relies on an Empirical Risk Minimization (ERM).

\noindent \textbf{Remark on the terminology}: $\beta$ can be considered as a norm vector of the hyperplane separating live and spoof samples. In the remaining part of the paper, when we use ``\textbf{live-vs-spoof hyperplane}'' or ``\textbf{hyperplane}'', it has the same meaning as $\beta$. Note, ``live-to-spoof transition'' is an abstract procedure in the image space, while ``live-vs-spoof hyperplane'' refers to a concrete classifier in the feature space. 

\Paragraph{Preliminary on Empirical Risk Minimization (ERM):}
ERM principle~\cite{vapnik1991principles} is a ubiquitous strategy that merges data from all training domains and learns a predictor that minimizes an averaged training error.
Specifically,
\begin{align}
    & \mathcal{L}_{\textit{ERM}} = \min _{\phi, \beta} \frac{1}{|\mathcal{E}|} \sum_{e \in \mathcal{E}} \mathcal{R}^{e}(\phi, \beta), 
\label{eq:erm}
\end{align}
where the empirical risk function $\mathcal{R}^{e}(\phi, \beta)$ for a given environment $e$ is defined by:
$$
\mathcal{R}^{e}(\phi, \beta) \triangleq \mathbb{E}_{(\*x_i, y_i, e_i=e) \sim \mathcal{D}} \ell\left(f(\*x_i;\phi, \beta), y_i\right).
$$
Common choices of the loss function $\ell(\cdot,\cdot)$ include cross-entropy loss~\cite{jia2020ssdg} and $L_1$ regression loss~\cite{liu2018learning, george2019deep}.

However, if samples from different domains are mixed together, 
ERM can utilize the easiest difference (image resolution, blurriness, camera setting) to differentiate live \vs.~spoof. 
Such a classifier will undesirably leverage spurious correlations to make  live/spoof predictions~\cite{arjovsky2019irm}. Therefore, the naive strategy can hurt the generalization of the unseen domain. % 
As shown in Fig.~\ref{fig:method}(a), ERM tends to fit all training data together and fails to learn a domain-invariant classifier with the mixed feature space. 
%

%%%%%%%%%%%%%%%%%%%%%%%%%%%%%%%%%%%% METHOD  %%%%%%%%%%%%%%%%%%%%%%%%%%%%%%%%%%%%
%
\SubSection{Separability}
\label{sec:sep}

We characterize the domain separability as supervised contrastive learning (dubbed SupCon)~\cite{2020supcon}, one of the latest developments for visual representation learning. Unlike other contrastive learning methods~\cite{chen2020simclr, chen2021exploring} that treat the augmented samples as a single class, SupCon aims to learn a representation space that gathers samples with the same labels while repelling samples from different ones. It naturally suits the need for the cross-domain FAS setting, since we treat samples with the same domain and with the same live/spoof label to form a cluster. 
%

%\textbf{Training objective} 
Given a training mini-batch $\{\*x_i, y_i, e_i\}_{i=1}^b$, we augment~\cite{2020supcon} the mini-batch as $\{\tilde{\*x}_i, \tilde{y}_i, \tilde{e}_i\}_{i=1}^{2b}$, using two random augmentations $\tilde{\bx}_{2i}$ and $\tilde{\*x}_{2i-1}$ of inputs $\bx_i$, with $\tilde{y}_{2i-1} \!=\! \tilde{y}_{2i} \!=\! y_i$, $\tilde{e}_{2i-1} \!=\! \tilde{e}_{2i} \!=\! e_i$. These images are fed into the network, yielding $L_2$-normalized embeddings $\{\*z_i\}_{i=1}^{2b}$. The per-batch SupCon loss (separability loss) is defined as:
\begin{equation}
\mathcal{L}_\textit{sep}=\sum_{i=1}^{2b} \frac{-1}{|S(i)|} \sum_{j \in S(i)} \log \frac{\exp \left(\*z_{i} \cdot \*z_{j} / \tau\right)}{\sum_{t=1, t\neq i}^{2b}\exp \left(\*z_{i} \cdot \*z_{t} / \tau\right)},
\label{eq:supcon}
\end{equation}
where $\tau$ is a temperature parameter, $i$ is the index of a sample typically called the \textit{anchor}, $S(i) \!=\! \{j \!\in\!\{1,\ldots,2b\} \!:\! j\neq i, \tilde{y}_j = \tilde{y}_i, \tilde{e}_j = \tilde{e}_i\}$ is the index set of \emph{positive samples} 
%consisting of the indices of the augmented sample and other 
that have the same live/spoof labels and belong to the same domain as the anchor $i$, and $|S(i)|$ is its cardinality. 
All the other samples in the mini-batch are referred to as \textit{negative samples}. 
Since positive samples are pulled together and negative samples are pushed apart, SupCon in Fig.~\ref{fig:method}(b) is capable of providing more distinguishable feature clusters for different domains and liveness classes, compared to a typical feature space learned by a vanilla ERM in Fig.~\ref{fig:method}(a). 
\SubSection{Alignment}
\label{sec:align}

Fig.~\ref{fig:method}(b) also shows that separability alone is not sufficient for improving domain generalization. 
The separated feature clusters can be located in any place in the feature space, and hence the domain-wise optimal hyperplane remains \textbf{variant}.
In this case, the global classifier can still undesirably incorporate the spurious correlation as the deciding factor as we show in Fig.~\ref{fig:teaser}.
To tackle this, we naturally investigate the following problem: 

\textit{How do we regularize a global live-vs-spoof hyperplane to align with domain-wise live-vs-spoof hyperplanes?}

We propose to formulate this problem as Invariant Risk Minimization (IRM)~\cite{arjovsky2019irm}, which aims to jointly optimize the feature space $\phi$ and the global live-vs-spoof hyperplane $\beta$, where $\beta$ is also optimal for each domain, shown in Fig.~\ref{fig:method}(c).

\begin{figure*}[t]
    \small\centering
    \includegraphics[width=\textwidth]{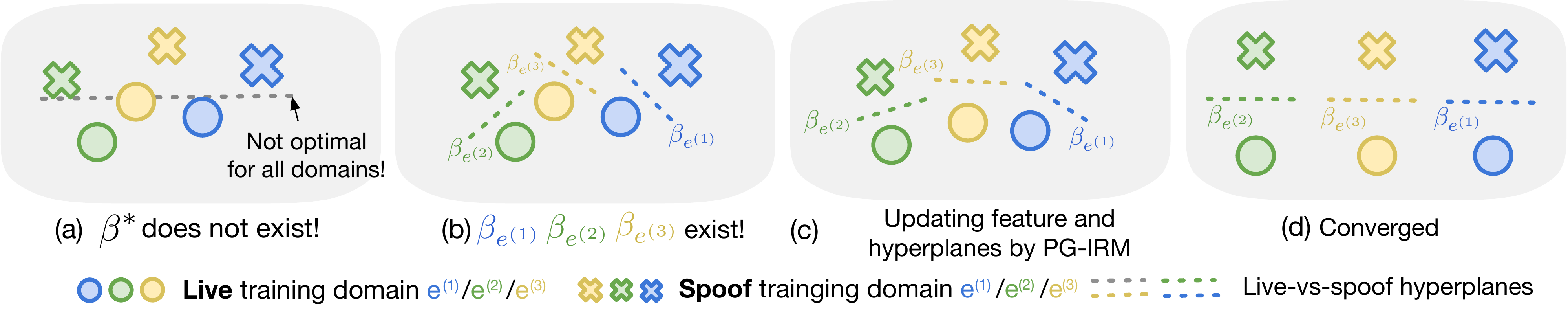}
    \vspace{-4ex}
    \caption{\small 
    {\bf PG-IRM optimization process:}
    An illustration of why a vanilla IRM can suffer from an infeasible solution (a), and how the proposed PG-IRM algorithm jointly updates the feature space and multiple hyperplanes towards convergence (b)-(d). }
    \label{fig:illustration}
\end{figure*}

\Paragraph{Preliminary on Invariant Risk Minimization (IRM):}
Specifically, the IRM objective can be formulated as the following constrained optimization problem:

\begin{align}
    & \min _{\phi, \beta^{*}} \frac{1}{|\mathcal{E}|} \sum_{e \in \mathcal{E}} \mathcal{R}^{e}(\phi, \beta^{*}) \rightarrow \mathcal{L}_{\textit{IRM}}  \label{eq:irm_target} \\ 
    & \textit{s.t.} \quad \beta^{*} \in \underset{\beta}{\arg\min} \mathcal{R}^{e}(\phi, \beta), \forall e \in \mathcal{E}.
\label{eq:irm_constrain}
\end{align}

Compared to the ERM~\eqref{eq:erm}, IRM enforces an additional constraint \eqref{eq:irm_constrain} to learn the domain-invariant hyperplanes. 
Specifically, if we define the domain-wise optimal hyperplane as 
$\beta_e \in {\arg\min  }_{\beta} \mathcal{R}^{e}(\phi, \beta)$.
A sufficient condition for constraint \eqref{eq:irm_constrain} to hold is $\beta_{e^{(1)}}=...=\beta_{e^{(E)}} = \beta^*$, which requires consistency between the globally optimal hyperplane and the domain-wise optimal hyperplanes.
However, IRM is known to be hard to solve~\cite{kamath2021doesirm,rosenfeld2020riskirm} due to the bi-level optimization nature of objective~(\ref{eq:irm_target}) and constraint~(\ref{eq:irm_constrain}). 

%

%\paragraph{PG-IRM objective} 
\Paragraph{Projected Gradient Optimization for IRM (PG-IRM):} 
We leverage Projected Gradient (PG) algorithm~\cite{nocedal1999numerical} to solve the non-trivial optimization objective~\eqref{eq:irm_target}, termed as PG-IRM. 
In PG-IRM, we propose to optimize multiple hyperplanes and converge them into a globally one via projected gradient. In Appendix~\ref{sec:proof_s1}, we provide detailed proof of PG-IRM objective being \textbf{equivalent} to IRM. Formally, the objective is rewritten as:
\begin{theorem}
\label{th:ourirm_objective}
\textbf{(PG-IRM objective)} For all $\alpha \!\in\! (0,1) $, the IRM objective is equivalent to the following objective: 

\begin{align}
    & \min _{\phi, \beta_{e^{(1)}}, ..., \beta_{e^{(E)}}} \frac{1}{|\mathcal{E}|} \sum_{e \in \mathcal{E}} \mathcal{R}^{e}(\phi, \beta_e) \rightarrow \mathcal{L}_{\textit{align}} \\ 
    &\text { s.t. }  \forall e \in \mathcal{E}, \exists \beta_e \in \Omega_e(\phi), \beta_e \in \Upsilon_{\alpha}(\beta_e), \nonumber 
\end{align}
where the parametric constrained set for each environment is simplified as 
$ \Omega_e(\phi) = \underset{\beta}{\arg \min } \mathcal{R}^{e}(\phi, \beta),$
and we define the \textbf{$\alpha$-adjacency set}: 
\begin{align}
    \Upsilon_{\alpha}(\beta_e) = \{\upsilon |& \underset{e' \in \mathcal{E} \backslash e}{\max}  \ \underset{\beta_{e'} \in \Omega_{e'}(\phi)}{\min}\|\upsilon - \beta_{e'}\|_2 \\
    & \le \alpha \underset{e' \in \mathcal{E} \backslash e}{\max}  \ \underset{\beta_{e'} \in \Omega_{e'}(\phi)}{\min}\|\beta_e - \beta_{e'}\|_2\}
    \label{eq:gamma_set}
\end{align}
\figvspace
\end{theorem}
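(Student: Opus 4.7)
The plan is to reduce the seemingly intricate $\alpha$-adjacency constraint to the familiar IRM condition that a single hyperplane minimizes every domain's empirical risk, and then verify that the two objective values coincide. The crux is a self-referential trick: plugging $\upsilon = \beta_e$ into the definition of $\Upsilon_{\alpha}(\beta_e)$ gives $M(\beta_e) \le \alpha\, M(\beta_e)$, where $M(\beta_e) := \max_{e' \ne e} \min_{\beta_{e'} \in \Omega_{e'}(\phi)} \|\beta_e - \beta_{e'}\|_2$. Since $\alpha \in (0,1)$ and $M(\beta_e) \ge 0$, this forces $M(\beta_e) = 0$. Under the mild assumption that each $\Omega_{e'}(\phi)$ is closed, so that the inner infimum is attained, it follows that $\beta_e$ lies in $\bigcap_{e' \in \mathcal{E}} \Omega_{e'}(\phi)$. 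This observation is what makes the $\alpha$-adjacency condition equivalent to the strong IRM constraint of a shared optimal classifier.

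With this reduction in hand, both directions of the equivalence become short. For the IRM-to-PG-IRM implication I would take any feasible $(\phi, \beta^{*})$ and set $\beta_e := \beta^{*}$ for every $e$; since $\beta^{*}$ already lies in every $\Omega_{e'}(\phi)$, we have $M(\beta_e) = 0$ and the $\alpha$-adjacency condition is trivially met, while the two objectives are syntactically identical. For the PG-IRM-to-IRM implication, given a feasible collection $(\phi, \{\beta_e\}_{e \in \mathcal{E}})$ the key observation yields $\beta_{e^{(1)}} \in \bigcap_{e'} \Omega_{e'}(\phi)$, which I would promote to the shared IRM hyperplane $\beta^{*}$. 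Because $\beta^{*}$ and each $\beta_e$ all attain $\min_\beta \mathcal{R}^{e}(\phi, \beta)$, the two objectives match term by term, so a minimizer on one side yields a minimizer on the other with the same value.

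The main technical obstacle is the passage from $M(\beta_e) = 0$ to $\beta_e \in \Omega_{e'}(\phi)$: without closedness of $\Omega_{e'}(\phi)$ one only obtains membership in the closure. I would therefore preface the proof with a regularity assumption that $\beta \mapsto \mathcal{R}^{e}(\phi,\beta)$ is lower semi-continuous with an attained, closed $\arg\min$ set, conditions readily satisfied by the cross-entropy or $L_1$ losses with $\beta$ on the unit sphere or any compact domain. A secondary subtlety worth a remark after the formal proof is that the feasible region is actually the same for every $\alpha \in (0,1)$, so the parameter $\alpha$ plays a purely algorithmic role: it controls the contraction rate of the projected-gradient step that shrinks the per-domain hyperplanes toward a common $\beta^{*}$, even though it does not affect the mathematical equivalence itself.
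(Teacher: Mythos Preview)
Your proposal is correct and follows essentially the same approach as the paper: both hinge on the observation that the self-referential condition $\beta_e \in \Upsilon_\alpha(\beta_e)$ reads $M(\beta_e) \le \alpha\, M(\beta_e)$, which for $\alpha \in (0,1)$ forces $M(\beta_e)=0$ and hence $\beta_e \in \bigcap_{e'} \Omega_{e'}(\phi)$. Your write-up is more streamlined (it skips the paper's intermediate $\epsilon$-relaxation reformulation) and is more explicit about the closedness of $\Omega_{e'}(\phi)$ and about why the two objective values coincide, but the core argument is the same.
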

Fig.~\ref{fig:illustration} shows the intuition of  the optimization process. 
For a $3$-domain case, PG-IRM starts with a shared feature space $\phi$ and $3$ separate hyperplanes $\beta_{e^{(1)}}$, $\beta_{e^{(2)}}$, $\beta_{e^{(3)}}$ for each domain (Fig.~\ref{fig:illustration}(b)). 
After each projected gradient descent, the hyperplanes move closer with feature space jointly updated (Fig.~\ref{fig:illustration}(c)). 
Upon convergence, $\beta_{e^{(1)}}, \beta_{e^{(2)}}, \beta_{e^{(3)}}$ become nearly identical (Fig.~\ref{fig:illustration}(d)), satisfying the IRM constraint $\beta^* \!=\! \beta_{e^{(1)}} \!=\! \beta_{e^{(2)}} \!=\! \beta_{e^{(3)}}$ for {\it all} domains.
%The above explanation delivers the following 
We provide two main insights of our PG-IRM  algorithm (see more details in Appendix~\ref{sec:proof}):
\begin{compactenum}
    \item \textbf{Optimizing multiple  hyperplanes:} 
    Compared to the conventional IRM that optimizes a single hyperplane, it is easier to converge for PG-IRM that optimizes multiple hyperplanes (\ie, one for each domain).
    Shown in Fig.~\ref{fig:illustration}(a-b), for the same feature space from the intermediate optimization stage, the solution $\beta^*$ to conventional IRM may not exist and the optimization has to be terminated.
    In contrast, $\beta_{e^{(1)}}, ..., \beta_{e^{(E)}}$ \textbf{always} exists (Fig.~\ref{fig:illustration}(b)) which makes solving for multiple hyperplanes more viable.
    \item \textbf{Pushing hyperplanes to be closer:}
    To align $\beta_{e^{(1)}}$, $\beta_{e^{(2)}}$ and $\beta_{e^{(3)}}$,
    PG-IRM updates domain-wise hyperplanes by interpolating with other hyperplanes. It can be mathematically considered as projecting the parameters of a hyperplane into the $\alpha$-adjacency set $\Upsilon_{\alpha}(\beta_e)$ as we illustrated in Fig.~\ref{fig:proj}.

    \begin{figure}[t!]
    \small\centering
    \includegraphics[width=.4\textwidth]{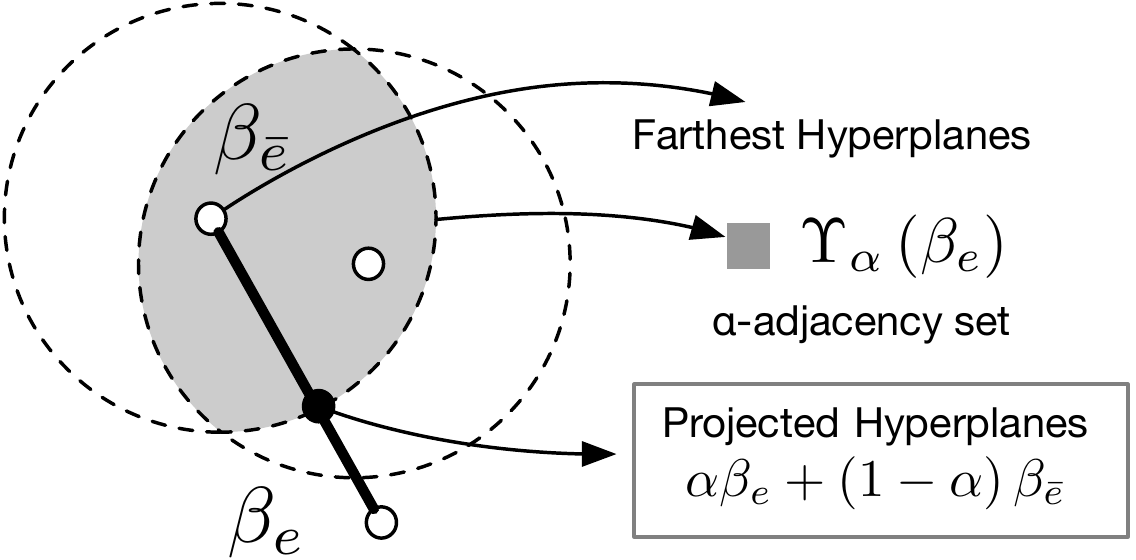}
    \caption{\small 
        {\bf Euclidean projection:}
        Illustration of Euclidean projection (solid black dot) to the $\alpha$-adjacency set $\Upsilon_\alpha\left(\beta_e\right)$. 
        Detailed proof and steps are provided in Alg.~\ref{alg:proj_grad_appendix} in Appendix~\ref{sec:proof_s2}. 
    }
    \figvspace
    \label{fig:proj}
\end{figure}

    \textbf{Remark (why PG is not applicable to IRM):}
    The PG algorithm can be infeasible for the conventional IRM, as the solution set for \eqref{eq:irm_constrain} can be \textbf{empty} and is thus non-projectable.
    Our PG-IRM objective in Eq.~\eqref{eq:gamma_set} contains a non-empty $\alpha$-adjacency set $\Upsilon_{\alpha}(\beta_e)$, and guarantees being projectable by simple linear interpolation. 

\end{compactenum}

%\Paragraph{Intuition.}  For the reader's convenience, 

% \input{subtex/alg_1}

% 

\begin{algorithm}[t]
\begin{algorithmic}[1]
  \State {\textbf{Input:}} Training data $\mathcal{D}=\{(\*x_i,y_i, e_i)\}_{i=1}^N$, network encoder $\phi$, classifiers $\beta_{e^{(1)}}, ... , \beta_{e^{(E)}}$, learning rate $\gamma$, alignment parameter $\alpha$, alignment starting epoch $T_a$.
        \For{$\text{t in 0, 1, ..., T}$}
        \State \textbf{Data Prep.}: Sample and augment a mini-batch.
        % of labeled data and apply augmentation as $\{\tilde{\*x}_i, \tilde{y}_i, \tilde{e}_i\}_{i=1}^{2b}$.
        \State \textbf{Forward/Backward}:  Calculate gradient by $\mathcal{L}_{all}$.
        
        \For{$e \in \mathcal{E}$}
            \State $\tilde{\beta}^{t+1}_e = \beta^{t}_e - \gamma \nabla_{\beta^{t}_e} \mathcal{L}_{\textit{all}}$ \Comment{\textcolor{blue}{SGD}}
            \State select $\beta^{t}_{\bar{e}}$ with $\bar{e}  = \underset{e' \in \mathcal{E}  \backslash e }{\text{argmax}} \|\tilde{\beta}^{t+1}_e - \beta^{t}_{e'}\|_2$
            \State $\alpha' = 1 - \mathbf{1}_{t > T_a} (1 - \alpha)$ \Comment{\textcolor{blue}{$\alpha'$ is 1 when $t \le T_a$}}
            \State $\beta^{t+1}_e = \alpha'
            \tilde{\beta}^{t+1}_e + (1 - \alpha')  \beta^t_{\bar{e}}$ \Comment{\textcolor{blue}{Interpolation}}
        \EndFor
        \State Update $\phi^{t+1} = \phi^{t} - \gamma \nabla_{\phi^t}  \mathcal{L}_{\textit{all}}$. \Comment{\textcolor{blue}{Update encoder}} 
    \EndFor
\end{algorithmic}
\caption{Training pipeline for SA-FAS}
\label{alg:main}
\end{algorithm}

\begin{table*}[t]
    \begin{minipage}[c]{0.72\textwidth}
        
        \small\centering
        \scalebox{0.85}{
            \begin{tabular}{rrrrrrrrr} \toprule
            \multicolumn{1}{c}{\multirow{2}{*}{\textbf{Method ($\%$)}}} & \multicolumn{2}{c}{\textbf{OCI$\rightarrow$M}} & \multicolumn{2}{c}{\textbf{OMI$\rightarrow$C}} & \multicolumn{2}{c}{\textbf{OCM$\rightarrow$I}} & \multicolumn{2}{c}{\textbf{ICM$\rightarrow$O}} \\
            \multicolumn{1}{c}{} & \multicolumn{1}{c}{\textbf{HTER }$\downarrow$} & \multicolumn{1}{c}{\textbf{AUC} $\uparrow$} & \multicolumn{1}{c}{\textbf{HTER  $\downarrow$}} & \multicolumn{1}{c}{\textbf{AUC $\uparrow$}} & \multicolumn{1}{c}{\textbf{HTER}  $\downarrow$} & \multicolumn{1}{c}{\textbf{AUC} $\uparrow$} & \multicolumn{1}{c}{\textbf{HTER}  $\downarrow$} & \multicolumn{1}{c}{\textbf{AUC}  $\uparrow$}
             \\ \midrule
            MMD-AAE~\cite{li2018domain} & 27.08 & 83.19 & 44.59 & 58.29 & 31.58 & 75.18 & 40.98 & 63.08 \\
            MADDG~\cite{shao2019multi} & 17.69 & 88.06 & 24.50 & 84.51 & 22.19 & 84.99 & 27.98 & 80.02 \\
            SSDG-M~\cite{jia2020ssdg} & 16.67 & 90.47 & 23.11 & 85.45 & 18.21 & 94.61 & 25.17 & 81.83 \\
            DR-MD-Net~\cite{wang2020cross} & 17.02 & 90.10 & 19.68 & 87.43 & 20.87 & 86.72 & 25.02 & 81.47 \\
            RFMeta~\cite{shao2020regularized} & 13.89 & 93.98 & 20.27 & 88.16 & 17.30 & 90.48 & 16.45 & 91.16 \\
            NAS-FAS~\cite{yu2020fas} & 19.53 & 88.63 & 16.54 & 90.18 & 14.51 & 93.84 & 13.80 & 93.43 \\
            D2AM~\cite{chen2021generalizable} & 12.70 & 95.66 & 20.98 & 85.58 & 15.43 & 91.22 & 15.27 & 90.87 \\
            SDA~\cite{wang2021self} & 15.40 & 91.80 & 24.50 & 84.40 & 15.60 & 90.10 & 23.10 & 84.30 \\
            DRDG~\cite{liu2021dual} & 12.43 & 95.81 & 19.05 & 88.79 & 15.56 & 91.79 & 15.63 & 91.75 \\
            ANRL~\cite{liu2021adaptive} & 10.83 & 96.75 & 17.83 & 89.26 & 16.03 & 91.04 & 15.67 & 91.90 \\
            SSAN-M~\cite{wang2022ssan} & 10.42 & 94.76 & 16.47 & 90.81 & 14.00 & 94.58 & 19.51 & 88.17 \\
            SSDG-R~\cite{jia2020ssdg} & 7.38 & 97.17 & 10.44 & 95.94 & 11.71 & 96.59 & 15.61 & 91.54 \\
            SSAN-R~\cite{wang2022ssan} & 6.67 & \textbf{98.75} & 10.00 & \textbf{96.67} & 8.88 & 96.79 & 13.72 & 93.63 \\
            PatchNet~\cite{wang2022patchnet} & 7.10 & 98.46 & 11.33 & 94.58 & 13.40 & 95.67 & 11.82 & 95.07 \\
            % Ours (ImageNet) & 6.67 & 97 & 9.67 & 93.84 & 10.15 & 96.16 & 11.64 & 94.92 \\
            \methodname (Ours) & \textbf{5.95} & 96.55 & \textbf{8.78} & 95.37 & \textbf{6.58}  & \textbf{97.54} & \textbf{10.00} & \textbf{96.23}  \\ \bottomrule
            \end{tabular}
        }
    \end{minipage}
    \hfill
    \begin{minipage}[t]{0.28\textwidth}
        \vspace*{-40pt}
        \caption{
            \small 
            {\bf Comparisons with SoTA methods:} 
            Cross-domain face anti-spoofing is evaluated among four popular benchmark datasets: CASIA (\textbf{C}), Idiap Replay (\textbf{I}), MSU-MFSD (\textbf{M}), and Oulu-NPU (\textbf{O}). 
            Methods are compared at their best performance following the commonly used evaluation process \cite{jia2020ssdg}. 
            $\uparrow$ indicates larger values are better, and $\downarrow$ indicates smaller values are better.
        }
        \label{tab:best}
    \end{minipage}
    \vspace{-1ex}
\end{table*}

\begin{table*}[t]
\small \centering
\scalebox{0.81}{
\begin{tabular}{lllll} \toprule
\multicolumn{1}{c}{\multirow{2}{*}{\textbf{Method ($\%$)}}} & \multicolumn{1}{c}{\textbf{OCI$\rightarrow$M}} & \multicolumn{1}{c}{\textbf{OMI$\rightarrow$C}} & \multicolumn{1}{c}{\textbf{OCM$\rightarrow$I}} & \multicolumn{1}{c}{\textbf{ICM$\rightarrow$O}} \\
& \multicolumn{1}{c}{\textbf{HTER}$\downarrow$ /\textbf{AUC}$\uparrow$/\textbf{TPR95}$\uparrow$}  & \multicolumn{1}{c}{\textbf{HTER}$\downarrow$ /\textbf{AUC}$\uparrow$/\textbf{TPR95}$\uparrow$}  & \multicolumn{1}{c}{\textbf{HTER}$\downarrow$ /\textbf{AUC}$\uparrow$/\textbf{TPR95}$\uparrow$}  &
\multicolumn{1}{c}{\textbf{HTER}$\downarrow$ /\textbf{AUC}$\uparrow$/\textbf{TPR95}$\uparrow$}
 \\ \midrule
SSDG-R~\cite{jia2020ssdg} & 14.65 $ ^{{1.21}} $ / 91.93	$ ^{{1.35}} $  / 53.68  $ ^{{2.56}} $
& 28.76	$ ^{{0.89}} $ / 80.91	$ ^{{1.10}} $ / 41.47 $ ^{{2.68}} $ 
& 22.84	$ ^{{1.14}} $ / 78.67	$ ^{{1.31}} $  / 50.80  $ ^{{5.95}} $
& 15.83	$ ^{{1.29}} $ / 92.13	$ ^{{0.96}} $ / 66.54 $ ^{{4.00}} $ \\
SSAN-R~\cite{wang2022ssan} & 21.79 $ ^{{3.68}} $ /  84.06   $ ^{{3.78}} $  / 51.91  $ ^{{4.28}} $
&  26.44   $ ^{{2.91}} $ /  78.84   $ ^{{2.83}} $ / 45.36 $ ^{{4.29}} $ 
&  35.39   $ ^{{8.04}} $ /  70.13   $ ^{{9.03}} $  / 64.00  $ ^{{2.70}} $
&  25.72   $ ^{{3.74}} $ /  79.37   $ ^{{4.69}} $ / 36.75  $ ^{{5.19}} $ \\
PatchNet~\cite{wang2022patchnet} & 25.92   $ ^{{1.13}} $ /  83.43   $ ^{{0.87}} $ / 38.75 $ ^{{8.31}} $
&  36.26   $ ^{{1.98}} $ /  71.38   $ ^{{1.89}} $ / 19.22 $ ^{{3.85}} $ 
&  29.75   $ ^{{2.76}} $ /  80.53   $ ^{{1.35}} $  / 54.25  $ ^{{2.18}} $
&  23.49   $ ^{{1.80}} $ / 84.62   $ ^{{1.92}} $ / 39.39  $ ^{{6.83}} $ \\ \midrule 
SA-FAS (Ours) 
& \textbf{14.36}	$ ^{{1.10}} $ / \textbf{92.06}	$ ^{{0.53}} $ / \textbf{55.71}	$ ^{{4.82}} $ 
& \textbf{19.40}	$ ^{{0.66}} $ / \textbf{
88.69}	$ ^{{0.67}} $ / \textbf{50.53}	$ ^{{3.60}} $ 
& \textbf{11.48}	$ ^{{1.10}} $ / \textbf{
95.74}	$ ^{{0.55}} $ / \textbf{77.05}	$ ^{{3.26}} $ 
& \textbf{11.29}	$ ^{{0.32}} $ / \textbf{95.23}	$ ^{{0.24}} $ / \textbf{73.38} $ ^{{1.64}} $ \\
\bottomrule
\end{tabular}}
\figvspace
\caption{\small 
{\bf Evaluation upon convergence:}
Evaluation of cross-domain face anti-spoofing among CASIA (\textbf{C}), Idiap Replay (\textbf{I}), MSU-MFSD
(\textbf{M}), and Oulu-NPU (\textbf{O}) databases. Methods are compared at their mean/std performance based on the last 10 epochs. 
}
\label{tab:mean}
\vspace{-1.5ex}
\end{table*}

\SubSection{Training and inference}
\label{sec:train_step}

\Paragraph{Overall losses}
Considering the contrastive loss Eqn.~\eqref{eq:supcon}, the overall objective (dubbed as SA-FAS) can be written as: 

\begin{align}
    & \min _{\phi, \beta_{e^{(1)}}, ..., \beta_{e^{(E)}}} \mathcal{L}_{\textit{\textit{align}}} + \lambda \mathcal{L}_{\textit{sep}} \quad \rightarrow \mathcal{L}_{\textit{all}}  
    \label{eq:irm_constrain_overall} \\
    &\text { s.t. }  \forall e \in \mathcal{E}, \exists \beta_e \in \Omega_e(\phi), \beta_e \in \Upsilon_{\alpha}(\beta_e), \nonumber
    %\figvspace
\end{align}
where $\lambda$ is the coefficient for the loss term. The overall training pipeline is provided in Alg.~\ref{alg:main}.

\Paragraph{Inference}
At the inference stage, we use the mean hyperplane from $\beta_{e^{(1)}}, ..., \beta_{e^{(E)}}$ to get the final score. 
Specifically, the output is given by $$f(\*x) = \mathbb{E}_{e\in\mathcal{E}}[\beta_{e}^T\phi(\*x)].$$
Note that upon convergence, the cosine distance between any two of $\beta_{e^{(1)}}, ..., \beta_{e^{(E)}}$ is very close to 1, \ie, $\beta_{e^{(1)}} \!\approx\! ... \!\approx\! \beta_{e^{(E)}}$. 
This observation is verified in Appendix~\ref{sec:cosine_curve}, with an ablation (converged angles \vs different $\alpha$) in Appendix~\ref{sec:sensitivity}.

\Section{Experiments}
\label{sec:exp}

\SubSection{Experimental setups}

\Paragraph{Datasets and protocols} 
We  evaluate on four widely used  datasets: \texttt{Oulu-NPU} (\textbf{O})~\cite{boulkenafet2017oulu}, \texttt{CASIA} (\textbf{C})~\cite{zhang2012casia}, \texttt{Idiap Replay attack} (\textbf{I})~\cite{chingovska2012replay}, and \texttt{MSU-MFSD} (\textbf{M})~\cite{wen2015msu}. 
Following  prior works, we treat each dataset as one domain and apply the leave-one-out test protocol to evaluate their cross-domain generalization. 
Specifically, we refer \textbf{OCI$\rightarrow$M} to be the protocol that trains on \texttt{Oulu-NPU}, \texttt{CASIA}, \texttt{Idiap Replay attack} and tests on  \texttt{MSU-MFSD}. \textbf{OMI$\rightarrow$C}, \textbf{OCM$\rightarrow$I} and \textbf{ICM$\rightarrow$O} are defined in a similar fashion. 

\Paragraph{Implementation details} 
The input images are cropped using MTCNN~\cite{zhang2016mtcnn} and resized to 256$\times$256. 
For fair comparisons with SoTA methods \cite{jia2020ssdg,wang2022ssan,wang2022patchnet}, we use the same ResNet-18 backbone.
We train the network with SGD optimizer and an initial learning rate of 5e\text{-}3, which is decayed by 2 at epoch 40 and 80 and the total training epoch is 100 in most set-ups\footnote{Due to the smaller training data size of \textbf{ICM}, we let the  \textbf{ICM$\rightarrow$O} to train for 300 epochs and decay at epoch 120 and 240. }.
We set the weight decay as 5e\text{-}4 and the batch size as 96 for each training domain. 
For \methodname hyperparameters, we set $\alpha\!=\!0.995$, $\lambda\!=\! 0.1$ and $T_a \!=\! 20$. 

\Paragraph{Evaluation metrics} We evaluate the model performance using three standard metrics: Half Total Error Rate
(HTER), Area Under Curve (AUC), and True Positive Rate (TPR95) at a False Positive Rate
(FPR) 5\%. 
While HTER and AUC assess the theoretical performance, TPR at a certain FPR is adept at
reflecting how well the model performs in practice. 

\SubSection{Cross-domain performance}

Tab.~\ref{tab:best} summarizes our comparison with an extensive collection of recent studies, including SoTA methods: \texttt{PatchNet}~\cite{wang2022patchnet}, \texttt{SSAN}~\cite{wang2022ssan} and \texttt{SSDG}~\cite{jia2020ssdg}.
\methodname outperforms the rivals by a significant margin on cross-domain FAS benchmarks.
In particular, we improve upon the best baseline \cite{wang2022ssan} by 2.30\% in HTER in the setting \textbf{OCM$\rightarrow$I}, which is more than \textbf{25\%} improvement.

\Paragraph{Comparison upon convergence} 
Note that the performance in Tab.~\ref{tab:best} follows the convention in \cite{jia2020ssdg}, which is reported on the training snapshot (\eg, epoch 16) with the lowest test error. 
While this setting may manifest the best performance from the model, the results can significantly fluctuate on the test set and hard to reflect the generalization performance when a test set is unavailable (shown in Appendix~\ref{sec:whyfairsetting}).
To provide a more fair setting, we propose to report the average performance from the \textbf{last 10 epochs} upon convergence.
In our case, the stopping criterion is either (1) the binary classification loss for live/spoof is smaller than 1e\text{-}3 for consecutive 10 epochs, or (2) the epoch number reaches max limit, whichever comes first.

In Tab.~\ref{tab:mean}, we compare with SoTA methods in this setting, and provide three key observations: 
(1) The numbers are way worse than the ones in Tab.~\ref{tab:best} across all methods, indicating the best model selected by conventional lowest test errors \cite{jia2020ssdg} has large randomness.
This also shows that cross-domain FAS is far less-solved than expected.
(2) The standard deviation in Tab.~\ref{tab:mean} denotes how stable each method performs.
Most methods can converge to a relatively stable status, while methods with an adversarial loss (\eg, \cite{wang2022ssan}) have a relatively larger standard deviation, indicating adversarial loss might trigger more unstable training.
(3) In our setting, \methodname still largely outperforms SoTA \cite{wang2022patchnet,wang2022ssan,jia2020ssdg}, which further validates the superiority of our method.
Our method is also the most stable compared to SoTA, with the smallest standard deviation.
We proceed by analyzing why traditional methods are less favorable in cross-domain FAS and why our methods perform better.

\Section{Ablation and Discussion}
\begin{table}[t]
\small \centering
\scalebox{0.85}{
\begin{tabular}{llll} \toprule
\multicolumn{1}{c}{\multirow{2}{*}{\textbf{Method}}} & \multicolumn{3}{c}{\textbf{Average}} \\
& \multicolumn{1}{c}{\textbf{HTER}$\downarrow$} & \multicolumn{1}{c}{\textbf{AUC}$\uparrow$} & \multicolumn{1}{c}{\textbf{TPR95}$\uparrow$}
 \\   \midrule
 SimCLR~\cite{chen2020simclr}  & 22.53 $ ^{1.31} $ & 84.42 $ ^{1.04} $ & 51.14 $ ^{3.44} $  \\
SimSiam~\cite{chen2021exploring}  & 18.89 $ ^{0.97} $ & 89.93 $ ^{0.80} $ & 56.62 $ ^{2.88} $  \\
  Triplet~\cite{schroff2015facenet}  & 18.75 $ ^{2.31} $ & 88.11 $ ^{2.30} $ & 50.53 $ ^{8.76} $  \\
SupCon (SSDG)~\cite{jia2020ssdg}  & 17.91 $ ^{1.05} $ & 90.10 $ ^{0.68} $ & 61.98 $ ^{2.87} $  \\
SupCon~\cite{2020supcon}  & 17.03 $ ^{1.73} $ & 90.68 $ ^{1.29} $ & 56.72 $ ^{5.06} $  \\
 \midrule 
ERM   & 17.22 $ ^{1.26} $ & 90.21 $ ^{1.38} $ & 58.62 $ ^{3.77} $  \\
DANN~\cite{ganin2016dann}  & 17.93 $ ^{1.02} $ & 90.66 $ ^{0.56} $ & 58.66 $ ^{3.14} $  \\
  IRM-v1~\cite{arjovsky2019irm}   & 17.41 $ ^{0.77} $ & 91.16 $ ^{0.52} $ & 60.98 $ ^{2.10} $  \\
 VREx~\cite{krueger2021out}   & 25.02 $ ^{1.92} $ & 80.65 $ ^{2.20} $ & 45.12 $ ^{3.78} $  \\
IB-IRM~\cite{ahuja2021ibirm}   & 17.57 $ ^{0.74} $ & 91.71 $ ^{0.51} $ & 62.16 $ ^{2.35} $  \\ 
PG-IRM (Ours)   & 15.58 $ ^{0.96} $ & 92.03 $ ^{0.62} $ & 63.31 $ ^{2.59} $  \\  \midrule
SA-FAS (Ours)  & \textbf{14.25} $ ^{0.79} $ & \textbf{92.93} $ ^{0.49} $ & \textbf{64.16} $ ^{3.33} $  \\
\bottomrule
\end{tabular}}
\figvspace
\caption{\small 
{\bf Ablation study:}
The averaged performance is computed over all four cross-domain settings.}
\label{tab:abl_main}
\vspace{-2ex}
\end{table}

\SubSection{Effectiveness of loss components}  
Our overall objective function Eq.~\eqref{eq:irm_constrain_overall} consists of two parts: (a) Separability loss ($\mathcal{L}_{sep}$) for feature space; and (b) Alignment loss ($\mathcal{L}_{align}$) for regularizing the classifier. We ablate the contribution of each component in Tab.~\ref{tab:abl_main}. 

\Paragraph{Separability loss}
We consider the two most common strategies used in the contrastive learning community (\ie, \texttt{SimCLR}~\cite{chen2020simclr}, \texttt{SimSiam}~\cite{chen2020simclr}) and one in face recognition (\ie, \texttt{Triplet loss}~\cite{schroff2015facenet}). We also provide the comparison of SupCon with the clustering policy used in SSDG~\cite{jia2020ssdg}\footnote{SSDG assumes live samples in all domains form one cluster, and spoof samples in each domain respectively form the other three clusters.}. 
All losses are directly applied on the penultimate layer's feature $\phi(\*x)$ with the same hyper-parameters, and all final classifications are supervised by ERM.
We observe that the SupCon loss used in our framework outperforms other rivals.
This validates the effectiveness of the domain-wise separable feature space for cross-domain FAS.

\Paragraph{Alignment loss}
IRM objective is known to be hard to optimize. 
Other than the proposed PG-IRM, existing works  \texttt{IRM-v1}~\cite{arjovsky2019irm}, \texttt{IB-IRM}~\cite{ahuja2021ibirm}, and \texttt{VRex}~\cite{krueger2021rex} alternatively consider a Lagrangian form:
\begin{equation}
\min _{\phi, \beta^{*}} \frac{1}{|\mathcal{E}|} \sum_{e \in \mathcal{E}}\left[\mathcal{R}^{e}(\phi, \beta^{*})+\lambda\left\|\nabla_{\beta^{*}} \mathcal{R}^{e}(\phi, \beta^{*})\right\|_{2}^{2}\right].
\label{eq:irm_langran}
\end{equation}
In Tab.~\ref{tab:abl_main}, we compare PG-IRM with the baseline ERM as well as other IRM alternatives, and our method shows a better overall performance. This shows further evidence that the Lagrangian penalty term can be ineffective, especially in the non-linear case \cite{rosenfeld2020riskirm, kamath2021doesirm}. In comparison, PG-IRM optimizes the IRM objective directly with Projected Gradient, which clearly distinguishes it from existing methods. 

Overall, the ablation studies suggest all components in our framework are indispensable to enhancing the generalization ability of cross-domain spoof detection.
\begin{figure}[t]
    \small\centering
    \includegraphics[width=0.5\textwidth]{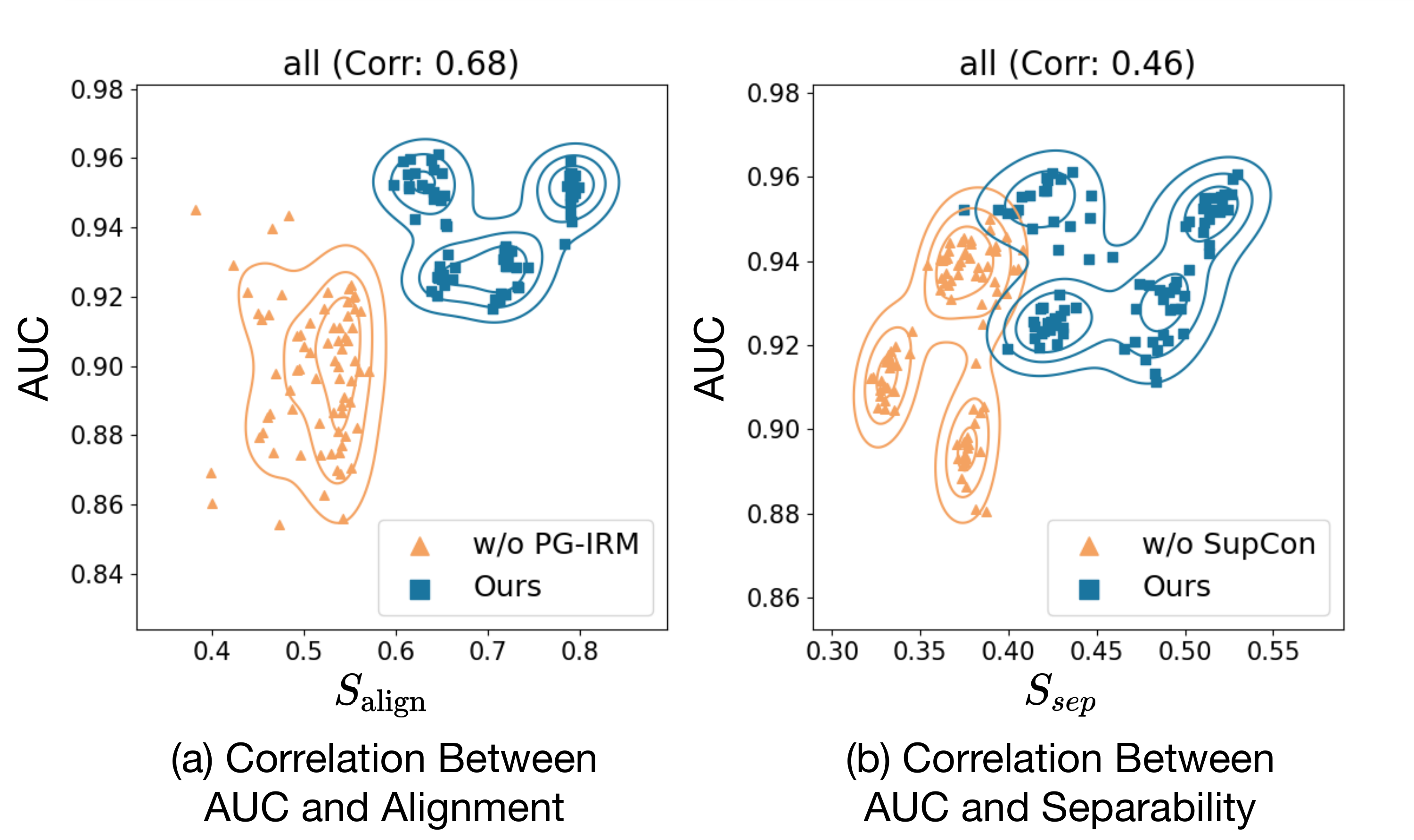}
    \vspace{-4ex}
    \caption{\small 
    {\bf Correlation of performance and SA-FAS:}:
    Correlation between the test performance AUC and two properties measure ($S_{align}/S_{sep}$). Each dot represents one snapshot during the training stage in all four cross-domain settings. We provide separate figures for each setting in Appendix (Fig.~\ref{fig:more_corr}).}
    \label{fig:corr_all}
    \vspace{-0.5ex}
\end{figure}
\begin{figure*}[t]
    \small\centering
    \includegraphics[width=1.0\textwidth]{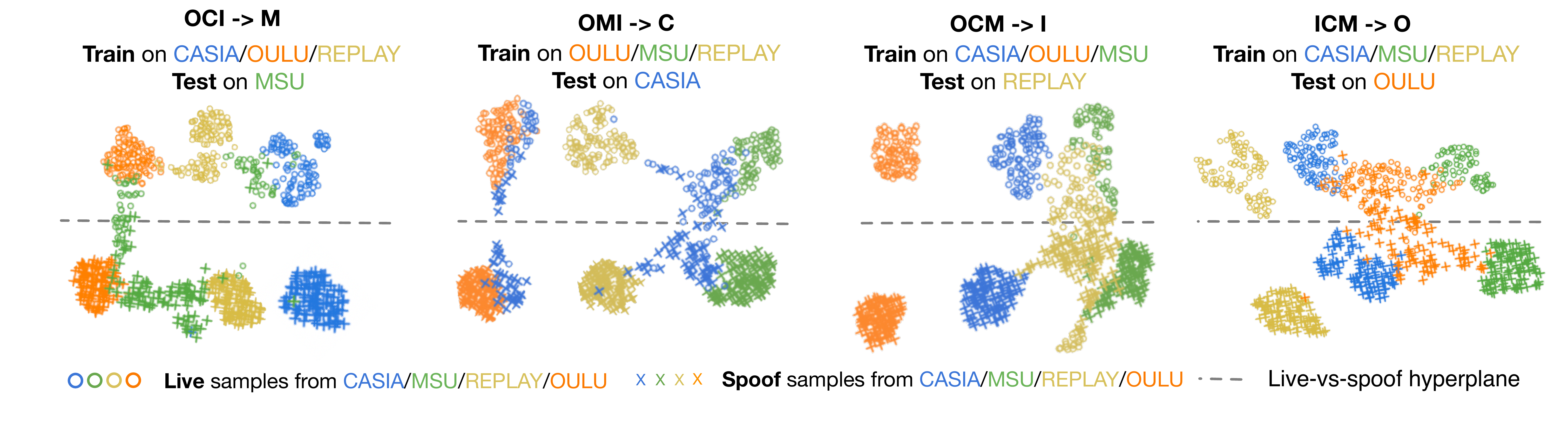}
    \vspace{-5ex}
    \caption{\small 
    {\bf Feature learned in different domains}:
    UMAP~\cite{mcinnes2018umap} projection of the penultimate layer of ResNet-18 trained with \methodname in the cross-test setting of face anti-spoofing datasets. 
    The dotted line shows the decision boundary derived from training samples in 2D space.}
    \figvspace
    \label{fig:umap}
\end{figure*}
\begin{figure}[t]
    \small\centering
    \includegraphics[width=.46\textwidth]{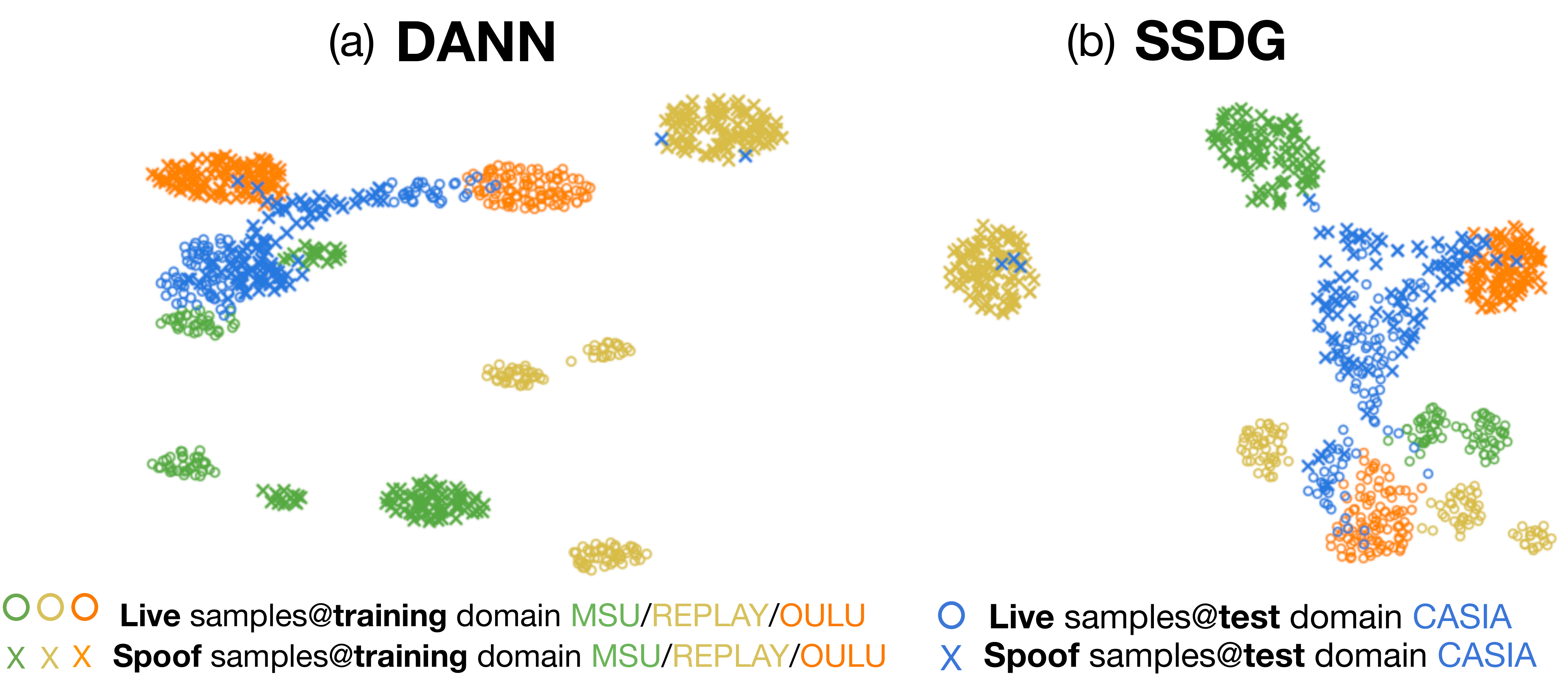}
    \figvspace
    \caption{\small
    {\bf Features of DANN \vs~SSDG:}
    UMAP \cite{mcinnes2018umap} visualization of the penultimate layer of ResNet-18 trained with DANN \cite{ganin2016dann} and SSDG \cite{jia2020ssdg} in the cross-test setting of OMI$\rightarrow$C.}
    \vspace{-0.3cm}
    \label{fig:cmp_da_umap}
\end{figure}

\SubSection{Separability and alignment analysis}  

SA-FAS aims to produce a feature space with two critical properties: Separability and Alignment. 
In this section, we empirically investigate if these two properties can lead to a better generalization performance. Specifically, we provide two corresponding measures based on the learned classifiers and the extracted feature vector $\*z$ of samples from the \textbf{test} domain. 
We define the separability score as:
$$S_{sep} =  1 - \cos(\mathbb{E}_{spoof}[\*z], \mathbb{E}_{live}[\*z]), $$
where we measure the cosine angle between the center of live/spoof features. 
A separated feature space naturally leads to a small cosine value and thus a larger $S_{sep}$ score. 
For the alignment score, we define:
\vspace{-2ex}
\vspace{\baselineskip}
\begin{equation}
    S_{align} = \mathbb{E}_{e \in \mathcal{E}}[ \cos(\beta_e, \tikzmarknode{oracle}{\highlight{blue}{$\mathbb{E}_{spoof}[\*z] - \mathbb{E}_{live}[\*z])$}})],
\end{equation}
where the trajectory from the center of spoof to live is treated as an \textit{oracle} vector, which we measure how close it is with the norm vector of $\beta$ of the learned hyperplane.
\begin{tikzpicture}[overlay,remember picture,>=stealth,nodes={align=left,inner ysep=1pt},<-]
    \path (oracle.north) ++ (0,0.5em) node[anchor=south west,color=blue!67] (scalep){\textit{oracle vector}};
    \draw [color=blue!87](oracle.north) |- ([xshift=-0.3ex,color=blue]scalep.south east);
\end{tikzpicture}

With the measure of two properties, we show their correlation to the generalization performance (\ie, AUC) in Fig.~\ref{fig:corr_all}.
We see that $S_{sep}$ and $S_{align}$ are positively related to their test performance. 
It validates that these two properties are beneficial for a domain-invariant classifier. 
Specifically, Fig.~\ref{fig:corr_all}(a) compares the setting with and without PG-IRM. Using PG-IRM leads to a higher alignment score and AUC, which verifies that PG-IRM can better align the live-vs-spoof hyperplanes for the unseen domain and improve the generalization ability. Similarly, Fig.~\ref{fig:corr_all}(b) compares the setting with and without SupCon. The results validate that SupCon can lead to better separability in the feature space which benefits the classification. 

\SubSection{UMAP visualization}  
Fig.~\ref{fig:umap} first provides UMAP~\cite{mcinnes2018umap} visualization of \methodname feature space from the penultimate layer. We see that the hyperplane between live samples and spoof samples is consistent across different training domains and also transferable to unseen test domains. 
For instance, in the setting of \texttt{OMI$\rightarrow$C}, 
the test live samples in blue circles can be separated from the test spoof samples in blue crosses by the hyperplane.  Another interesting finding is that some \texttt{CASIA} samples in blue are closer to \texttt{OULU} with \textbf{high resolution} and some are closer to \texttt{MSU} or \texttt{REPLAY} with \textbf{low resolution}, which reflects the fact that \texttt{CASIA} is a mixed dataset with both low and high resolution images. 
These findings validate that the domain gap (resolution) manifests in a way that is invariant to the live-vs-spoof hyperplane. 

Beyond numerical and visual results, the superiority of domain-variant feature space can also be validated by theoretical support. Specifically, the estimated error bound for binary
classification in domain generalization~\cite{blanchard2021domain} becomes larger if $(M, n)$ is replaced with $(1, Mn)$, where $M$ is the domain number and $n$ is the training set size per domain. 
It indicates that separately training datasets from different domains is better than pooling them into one mixed dataset.

\Paragraph{DANN~\cite{ganin2016dann} and SSDG~\cite{jia2020ssdg} visualization} 
We also compare the feature space of methods that aim to remove domain-specific signals from its feature representation.
DANN~\cite{ganin2016dann} leverages the adversarial loss to encourage the backbone to provide a domain-invariant feature. 
Fig.~\ref{fig:cmp_da_umap}(a) shows that the domain gap yet still broadly exists, especially for the test data from an unseen domain, which backfires on the generalizability of the classifier.
Similarly, SSDG~\cite{jia2020ssdg} learns a partial domain-invariant feature space where all live samples are clustered in one group while spoof samples are kept to be domain-dispersed. 
Although the degradation direction aligns better between train and test, compared to DANN, the domain gap still exists for live training samples as shown in Fig.~\ref{fig:cmp_da_umap}(b). 
These findings further validate the necessity of regularizing the live-vs-spoof hyperplanes to be consistent across different domains.

\Section{Conclusion}

This paper provides a new learning framework \methodname that learns domain-variant features but domain-invariant decision boundaries for cross-domain FAS. Our framework is naturally motivated, which facilitates invariant decision boundaries and learning distinguishable representations. We provide important theoretical insights that IRM objectives can be equivalently optimized by the PG with an alternative objective. Experiments show that \methodname can notably improve performance compared to the current best methods, establishing state-of-the-art. We also discuss the limitation of our work in Appendix~\ref{sec:limit}. 
We hope this paper will inspire more future works in incorporating domain-specific signals in FAS feature representation, and also extending this idea to broader domain generalization tasks.

\newpage

%%%%%%%%% REFERENCES

{\small
\bibliographystyle{ieee_fullname}
\bibliography{egbib}
}

%%%%%%%%% REFERENCES

\clearpage
\newpage
\appendix

\begin{figure*}[htb]
    \small\centering
    \includegraphics[width=0.95\textwidth]{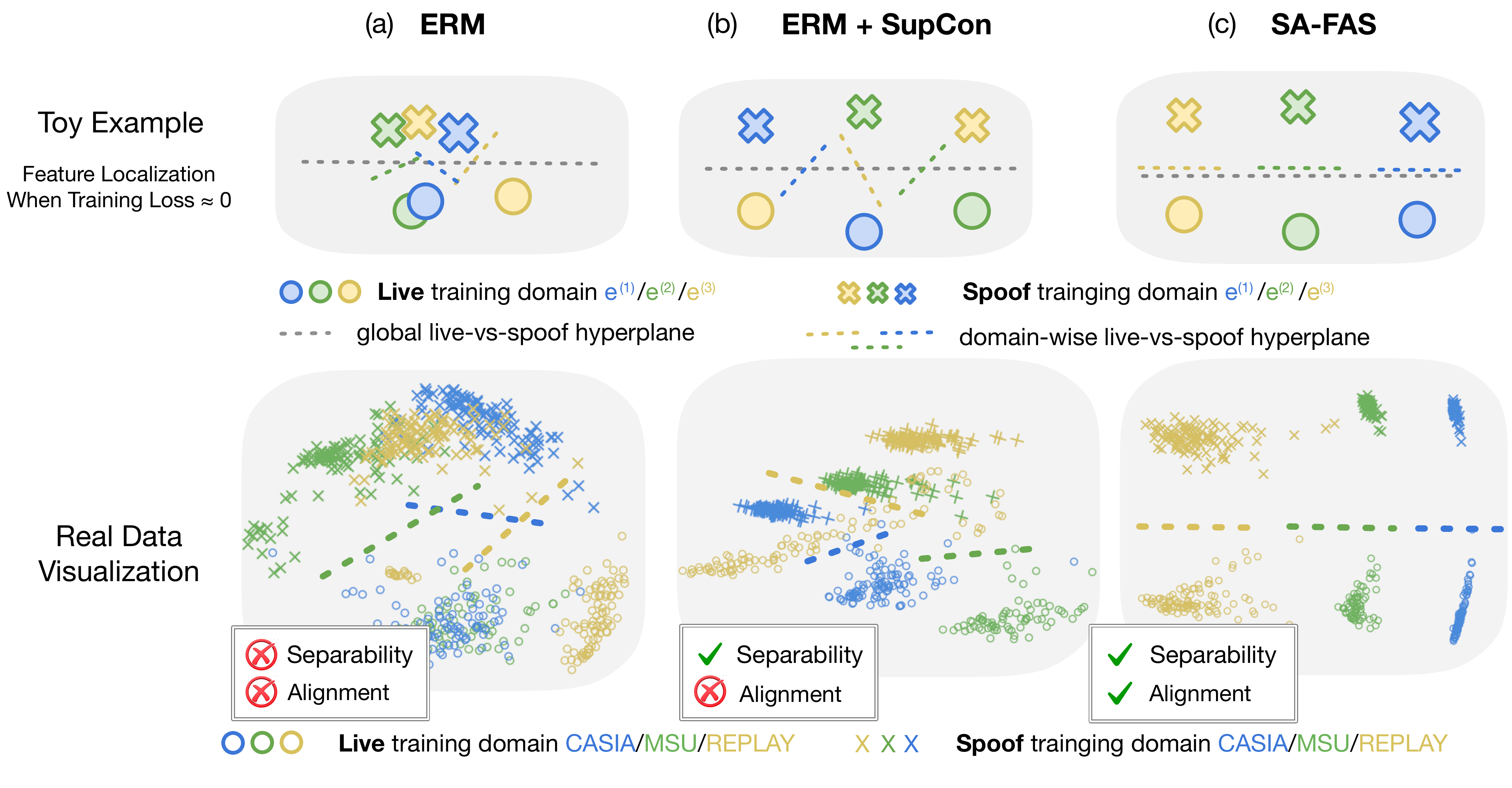}
    \caption{\small Illustration of feature space with three optimization objectives (ERM/ERM+SupCon/SA-FAS). For each objective, the first row shows the feasible solution in toy examples where each domain with a live/spoof label is represented by one circle/cross. The second role shows the visualization of real data via linear projection. The visualization is conducted by inserting and scattering the features from a 2-dimensional hidden layer between the penultimate layer and the final output layer. }%
    \label{fig:method-sup}
\end{figure*}

%\protect\footnotemark
%\footnotetext{}

% \section{Methdology Illustration with Real Data}
% \label{sec:method-sup}

\section{Detailed Proof}
\label{sec:proof}
In the main paper, we propose to use a project gradient algorithm to efficiently optimize the hard IRM objective. In this section, we provide a formal proof composed of two main steps: 
(1) We show in Section~\ref{sec:proof_s1} that the original IRM objective is equivalent to the PG-IRM objective shown in Theorem~\ref{th:ourirm_objective}. (2) In Section~\ref{sec:proof_s2}, we show that the PG-IRM objective can be efficiently optimized by the project gradient descent algorithm illustrated in Alg.~\ref{alg:proj_grad_appendix}.

\subsection{PG-IRM objective is equivalent to IRM}
\label{sec:proof_s1}
As a recap of our learning setting, a learner is given access to a set of training data from $E$ environments $\mathcal{E} = \{e^{(1)}, e^{(2)}, .., e^{(E)}\}$ and the IRM objective is the following constrained optimization problem:

\begin{align}
    & \min _{\phi, \beta^{*}} \frac{1}{|\mathcal{E}|} \sum_{e \in \mathcal{E}} \mathcal{R}^{e}(\phi, \beta^{*}) \quad  \label{eq:irm_target_appendix} \\ 
    & \text { s.t. } \quad \beta^{*} \in \underset{\beta}{\arg\min  } \mathcal{R}^{e}(\phi, \beta) \quad \forall e \in \mathcal{E},
\label{eq:irm_constrain_appendix}
\end{align}
where the risk function for a given domain/distribution e is:

$$
\mathcal{R}^{e}(\phi, \beta) \doteq \mathbb{E}_{(\*x_i, y_i, e_i=e) \sim \mathcal{D}} \ell\left(f(\*x_i;\phi,\beta), y_i\right).
$$

\begin{theorem*}  (Recap of Theorem~\ref{th:ourirm_objective})
For all $\alpha \in (0,1) $, the IRM objective is equivalent to the following objective: 

\begin{align}
    & \min _{\phi, \beta_{e^{(1)}}, ..., \beta_{e^{(E)}}} \frac{1}{|\mathcal{E}|} \sum_{e \in \mathcal{E}} \mathcal{R}^{e}(\phi, \beta_e) \quad  \label{eq:irm_target_sepirm} \\ 
    &\text { s.t. }  \forall e \in \mathcal{E}, \exists \beta_e \in \Omega_e(\phi), \beta_e \in \Upsilon_{\alpha}(\beta_e),
\label{eq:irm_constrain_sepirm}
\end{align}
where the parametric constrained set for each environment is simplified as 
$$ \Omega_e(\phi) = \underset{\beta}{\arg \min } \mathcal{R}^{e}(\phi, \beta),$$
and we define 
\begin{align}
\begin{split}
    & \Upsilon_{\alpha}(\beta_e) = \{\upsilon | \underset{\forall e' \in \mathcal{E}  \backslash e, \beta_{e'} \in \Omega_{e'}(\phi)}{\min}\|\upsilon - \beta_{e'}\|_2 \\
    & \le \alpha \underset{\forall e' \in \mathcal{E}  \backslash e, \beta_{e'} \in \Omega_{e'}(\phi)}{\min}\|\beta_e - \beta_{e'}\|_2\}
\end{split}
\end{align}
\label{th:pgirm_sup}
\end{theorem*}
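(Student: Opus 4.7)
The plan is to establish the equivalence as a pair of containments between the feasible sets together with equality of the objective values on those sets, with the entire argument resting on the self-referential structure of the constraint $\beta_e \in \Upsilon_\alpha(\beta_e)$. Substituting $\upsilon = \beta_e$ into the defining inequality of $\Upsilon_\alpha(\beta_e)$ produces
\[
   D(\beta_e) \;\le\; \alpha\, D(\beta_e),
   \qquad
   D(\beta_e) \;:=\; \max_{e' \in \mathcal{E} \setminus e}\ \min_{\beta_{e'} \in \Omega_{e'}(\phi)} \|\beta_e - \beta_{e'}\|_2.
\]
Since $D(\beta_e) \ge 0$ and $\alpha \in (0,1)$, this forces $D(\beta_e) = 0$; combined with closedness of each argmin set $\Omega_{e'}(\phi)$ (which follows from continuity of the per-domain risk $\mathcal{R}^{e'}(\phi,\cdot)$ for standard losses such as cross-entropy or $L_1$), I would conclude the lemma-like fact that a PG-IRM-feasible $\beta_e$ lies in $\Omega_{e'}(\phi)$ for \emph{every} $e' \in \mathcal{E}$, not merely for $e' = e$.

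With this observation in hand, both inclusions are nearly automatic. For the direction IRM~$\Rightarrow$~PG-IRM, I would take a feasible pair $(\phi, \beta^*)$ for the program~\eqref{eq:irm_target_appendix}--\eqref{eq:irm_constrain_appendix} and set $\beta_e := \beta^*$ for every $e$; then $\beta_e \in \Omega_e(\phi)$ by IRM feasibility, the witness $\beta_{e'} = \beta^*$ drives every min-distance appearing in $D(\beta_e)$ to zero, and the adjacency condition holds trivially, while the objective summands agree term by term. For the reverse direction, starting from a PG-IRM-feasible tuple $(\phi, \beta_{e^{(1)}}, \dots, \beta_{e^{(E)}})$, the boxed observation shows that $\beta^{*} := \beta_{e^{(1)}}$ minimizes $\mathcal{R}^{e}(\phi, \cdot)$ for every $e$, hence $(\phi, \beta^{*})$ is IRM-feasible; and because each $\beta_e$ is itself a minimizer of $\mathcal{R}^{e}(\phi, \cdot)$, we have $\mathcal{R}^{e}(\phi, \beta_e) = \mathcal{R}^{e}(\phi, \beta^{*})$, so the two objective values coincide.

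The step I expect to be the most delicate is justifying ``distance zero to $\Omega_{e'}(\phi)$ implies membership in $\Omega_{e'}(\phi)$'', which requires the argmin set to be closed. For the losses actually used in the paper, continuity of $\mathcal{R}^{e'}(\phi, \cdot)$ in $\beta$ makes this a one-line verification; if one wanted a loss-agnostic statement, the equivalence can be rephrased at the level of optimal values rather than feasible points, which only needs approximating minimizers and side-steps closedness entirely. The role of the hypothesis $\alpha \in (0,1)$ is confined to collapsing the self-referential inequality $D \le \alpha D$ into $D = 0$; no further structural use of $\alpha$ enters the argument, which also explains why every strict value in the open interval yields the same equivalence.
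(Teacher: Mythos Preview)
Your proposal is correct and rests on the same pivotal observation as the paper: the self-referential constraint $\beta_e \in \Upsilon_\alpha(\beta_e)$ unfolds to $D(\beta_e) \le \alpha\, D(\beta_e)$, which for $\alpha \in (0,1)$ forces $D(\beta_e)=0$ and hence $\beta_e \in \bigcap_{e'} \Omega_{e'}(\phi)$, recovering the IRM constraint. You are in fact more careful than the paper about closedness of the argmin sets and about verifying that the objective values coincide in both directions; the paper's proof proceeds as a chain of reformulations and leaves those points implicit.
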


\begin{proof}

The constraint (\ref{eq:irm_constrain_appendix}) means that the $\beta^*$ is the optimal linear classifier at all environments, which is equivalent to saying that $\beta^*$ lies in the joint of the optimal solution set in each environment.
Equivalently, we can formularize the optimization target (\ref{eq:irm_target_appendix}) as a parametric constrained optimization problem with constrain:

% \begin{align}
%   \quad \beta^{*} \in  \underset{e \in \mathcal{E}}{\cap} \Omega_e(\phi), 
%   \label{eq:irm_constrain_set}
% \end{align}

% FANCY EQUATION 
\vspace{\baselineskip}
\begin{equation}
    \label{eq:irm_constrain_set_appendix}
    \beta^{*} \in  \underset{e \in \mathcal{E}}{\cap} \tikzmarknode{omega}{\highlight{blue}{$\Omega_e(\phi)$}}, 
\end{equation}
\begin{tikzpicture}[overlay,remember picture,>=stealth,nodes={align=left,inner ysep=1pt},<-]
    % For "X"
    \path (omega.north) ++ (0,0.5em) node[anchor=south west,color=blue!67] (scalep){$\underset{\beta}{\arg \min } \mathcal{R}^{e}(\phi, \beta)$};
    \draw [color=blue!87](omega.north) |- ([xshift=-0.3ex,color=blue]scalep.south east);
\end{tikzpicture}

where the parametric constrained set for each environment is $\Omega_e(\phi) = \underset{\beta}{\arg \min } \mathcal{R}^{e}(\phi, \beta)$ (Note that $\Omega_e(\phi)$ can be a set with cardinality bigger than 1, since the optimal linear classifier may not be unique). The constraint (\ref{eq:irm_constrain_set_appendix}) implies that $\beta^*$ lies in the joint set of $\Omega_e(\phi)$, which also means that there is an element in each $\Omega_e(\phi)$ equal to $\beta^*$. We refer to such element to be $\beta_e \in \Omega_e(\phi)$, and we have the alternative form: 

\begin{align}
   \forall e \in \mathcal{E}, \exists \beta_e \in \Omega_e(\phi), \beta^{*} = \beta_e
   \label{eq:ele_in_env_appendix}
\end{align}

Equivalently, 

% \begin{align}
%   \forall e \in \mathcal{E}, \exists \beta_e \in \Omega_e(\phi), \beta_e \in \underset{e' \in \mathcal{E}}{\cap} \Omega_{e'}(\phi)
% \end{align}

% FANCY EQUATION 
\vspace{\baselineskip}
\begin{equation}
   \forall e \in \mathcal{E}, \exists \beta_e \in \Omega_e(\phi), \beta_e \in \tikzmarknode{betastar}{\highlight{blue}{$\underset{e' \in \mathcal{E}  \backslash e}{\cap} \Omega_{e'}(\phi)$}}
   \label{eq:no_beta_star_appendix}
\end{equation}
\begin{tikzpicture}[overlay,remember picture,>=stealth,nodes={align=left,inner ysep=1pt},<-]
    % For "X"
    \path (betastar.north) ++ (0,0.5em) node[anchor=south west,color=blue!67] (scalep){by (\ref{eq:irm_constrain_set_appendix}) and (\ref{eq:ele_in_env_appendix})};
    \draw [color=blue!87](betastar.north) |- ([xshift=-0.3ex,color=blue]scalep.south east);
\end{tikzpicture}

The interpretation of constraint (\ref{eq:no_beta_star_appendix}) is that --- for all environments, there is a hyperplane in the optimal set $\Omega_{e}(\phi)$ that also lies in the intersection of other environments' optimal set ($\underset{e' \in \mathcal{E}  \backslash e}{\cap} \Omega_{e'}(\phi)$). 
Now we rewrite the optimization target (\ref{eq:irm_target}) as:

\begin{align}
    & \min _{\phi, \beta_{e^{(1)}}, ..., \beta_{e^E}} \frac{1}{|\mathcal{E}|} \sum_{e \in \mathcal{E}} \mathcal{R}^{e}(\phi, \beta_e) \quad  \label{eq:irm_target_v1} \\ 
    &\text { s.t. } \forall e \in \mathcal{E}, \exists \beta_e \in \Omega_e(\phi), \beta_e \in   \underset{e' \in \mathcal{E} \backslash e}{\cap} \Omega_{e'}(\phi)
\label{eq:irm_constrain_v1}
\end{align}

In this way, we can get rid of finding a unique $\beta^*$, but instead optimizing multiple linear classifiers $\beta_{e^{(1)}}, ..., \beta_{e^E}$, which is easier to optimize in a relaxed form as we will show next. 

One key challenge for this optimization problem is that there is no guarantee that $\underset{e' \in \mathcal{E} \backslash e}{\cap} \Omega_{e'}(\phi)$ is non-empty for a feature extractor $\phi$ and $\beta_e$. We therefore relax the optimization target as: \vspace{-0.2cm}

% \begin{align}
%     & \min _{\phi, \alpha, \beta_{e^{(1)}}, ..., \beta_{e^E}} \frac{1}{|\mathcal{E}|} \sum_{e \in \mathcal{E}} \mathcal{R}^{e}(\phi, \beta_e) \quad  \\ 
%     &\text { s.t. } \forall e \in \mathcal{E}, \exists \beta_e \in \Omega_e(\phi), \|\beta_e - \underset{e' \in \mathcal{E} \backslash e}{\cap} \Omega_{e'}(\phi)\|_2 < \alpha, 
% \label{eq:irm_constrain_v2_appendix}
% \end{align}

% FANCY EQUATION 
\vspace{\baselineskip}
\begin{align}
    & \min _{\phi, \epsilon, \beta_{e^{(1)}}, ..., \beta_{e^E}} \frac{1}{|\mathcal{E}|} \sum_{e \in \mathcal{E}} \mathcal{R}^{e}(\phi, \beta_e) \quad  \\ 
    &\text { s.t. } \forall e \in \mathcal{E}, \exists \beta_e \in \Omega_e(\phi),
    \tikzmarknode{approx}{\highlight{blue}{$\underset{e' \in \mathcal{E} \backslash e}{\max} \|\beta_e - \Omega_{e'}(\phi)\|_2 \le \epsilon$}}, 
   \label{eq:irm_constrain_v2_appendix}
\end{align}
\begin{tikzpicture}[overlay,remember picture,>=stealth,nodes={align=left,inner ysep=1pt},<-]
    \path (approx.south) ++ (0,-0.5em) node[anchor=north east,color=blue!67] (scalep){relax $\beta_e \in \underset{e' \in \mathcal{E} \backslash e}{\cap} \Omega_{e'}(\phi)$};
    \draw [color=blue!87](approx.south) |- ([xshift=-0.3ex,color=blue]scalep.south west);
\end{tikzpicture}

where we define the $l_2$ distance between a vector $\beta$ and a set $\Omega$ as : $\|\beta - \Omega\|_2 = \underset{\upsilon \in \Omega}{\min} \|\beta - \upsilon\|_2 $.

Practically, $\epsilon$ can be set to be any variable converging to 0 during the optimization stage. Without losing the generality, we change the constraint (\ref{eq:irm_constrain_v2_appendix}) to the following form: 

\begin{align}
\begin{split}
 & \forall e \in \mathcal{E}, \exists \beta_e \in \Omega_e(\phi), \\ 
 & \underset{e' \in \mathcal{E} \backslash e}{\max}  \ \underset{\beta_{e'} \in \Omega_{e'}(\phi)}{\min}\|\beta_e - \beta_{e'}\|_2 \le \\ & \quad \quad \quad \quad \quad \quad \alpha 
 \underset{e' \in \mathcal{E} \backslash e}{\max} \ \underset{\beta_{e'} \in \Omega_{e'}(\phi)}{\min}\|\beta_e - \beta_{e'}\|_2, 
\label{eq:irm_constrain_v4_appendix}
\end{split}
\end{align}

where $\alpha \in (0,1) $. Note that constraint (\ref{eq:irm_constrain_v4_appendix}) will be satisfied only when $\underset{e' \in \mathcal{E} \backslash e}{\max}  \ \underset{\beta_{e'} \in \Omega_{e'}(\phi)}{\min}\|\beta_e - \beta_{e'}\|_2 = 0$. Therefore,  constraint (\ref{eq:irm_constrain_v4_appendix}) is equivalent to constraint (\ref{eq:no_beta_star_appendix}), and thus equivalent to the original constraint (\ref{eq:irm_constrain_appendix}). 

If we let the set 

\begin{align}
\begin{split}
    & \Upsilon_{\alpha}(\beta_e) = \{\upsilon | \underset{e' \in \mathcal{E} \backslash e}{\max}  \ \underset{\beta_{e'} \in \Omega_{e'}(\phi)}{\min}\|\upsilon - \beta_{e'}\|_2 \\
    & \le \alpha \underset{e' \in \mathcal{E} \backslash e}{\max}  \ \underset{\beta_{e'} \in \Omega_{e'}(\phi)}{\min}\|\beta_e - \beta_{e'}\|_2\}
\end{split}
\end{align}

Then the constraint (\ref{eq:irm_constrain_v4_appendix}) can be simplified to 
\begin{align}
 & \forall e \in \mathcal{E}, \exists \beta_e \in \Omega_e(\phi), \beta_e \in \Upsilon_{\alpha}(\beta_e)
\label{eq:irm_constrain_v5_appendix}
\end{align}

\end{proof}

\subsection{Projected Gradient Optimization for PG-IRM objective}
\label{sec:proof_s2}
We proceed with introducing how the Projected Gradient Descent can effectively optimize the PG-IRM objective. We start by introducing the background of  the Projected Gradient Descent algorithm. 

% \paragraph{Preliminaries on the Projected Gradient Descent (PGD)}

Projected Gradient Descent is commonly applied in constrained optimization, which aims to find a point $\theta$ achieving the smallest value of some loss function $\mathcal{L}$ subject to the requirement that $\theta$ is contained in the feasible set $\Omega$. Formally, the objective can be written as: 
$$\min _{\theta \in \Omega} \mathcal{L}(\theta)$$

If we minimize the objective $\mathcal{L}(\theta)$ by gradient descent, we have $$(\text{GD})\quad \theta := \theta - \gamma \nabla \mathcal{L}(\theta),$$
where $\gamma$ is the step size. However, it is not guaranteed that the updated $\theta$ still falls into the set $\Omega$. The projected gradient descent (PGD) algorithm is designed to project the solution back in the feasible set. Formally, 
$$(\text{PGD})\quad\theta := P_{\Omega}(\theta - \gamma \nabla \mathcal{L}(\theta)),$$
where the $P_{\Omega}(\cdot)$ is defined as the Euclidean Projection:

$$P_{\Omega}(u)=\underset{v \in \Omega}{\arg \min} \|u-v\|_2 $$

In the PG-IRM objective, we have the constraint set $\Omega = \Upsilon_{\alpha}(\beta_e)$, we show in the next Lemma~\ref{lemma:linear_interp}
 that the Euclidean Projection from $\beta_e$ to $\Upsilon_{\alpha}(\beta_e)$ is equivalent to the linear interpolation between $\beta_e$ and the farthest hyperplane $\beta_{\bar{e}}$ for environment $\bar{e}$.

\begin{lemma} Given that 

\begin{align*}
\begin{split}
    & \Upsilon_{\alpha}(\beta_e) = \{\upsilon | \underset{e' \in \mathcal{E} \backslash e}{\max}  \ \underset{\beta_{e'} \in \Omega_{e'}(\phi)}{\min}\|\upsilon - \beta_{e'}\|_2 \\
    & \le \alpha \underset{e' \in \mathcal{E} \backslash e}{\max}  \ \underset{\beta_{e'} \in \Omega_{e'}(\phi)}{\min}\|\beta_e - \beta_{e'}\|_2\}
\end{split}
\end{align*}
We have:
    $$P_{\Upsilon_{\alpha}(\beta_e)}(\beta_e) = \alpha
    \beta_e + (1 - \alpha)  \beta_{\bar{e}},$$
where $\beta_{\bar{e}}$ is selected with $\bar{e}  = \underset{e' \in \mathcal{E}  \backslash e }{\text{argmax}} \|\beta_e - \beta_{e'}\|_2$.
\label{lemma:linear_interp}
\end{lemma}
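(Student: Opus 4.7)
The plan is to recast the claim as a Euclidean projection onto a sublevel set of a max-min distance function, identify the binding constraint geometrically, and then verify feasibility and optimality of the proposed linear-interpolation point. Write $R := \max_{e' \neq e}\min_{\beta'_{e'} \in \Omega_{e'}(\phi)}\|\beta_e - \beta'_{e'}\|_2$ and $d(v) := \max_{e' \neq e}\operatorname{dist}(v, \Omega_{e'}(\phi))$, so that $\Upsilon_\alpha(\beta_e) = \{v : d(v) \le \alpha R\}$. Since $d(\beta_e) = R > \alpha R$ whenever $\alpha \in (0,1)$, the anchor $\beta_e$ sits strictly outside $\Upsilon_\alpha(\beta_e)$ and any projection must land on its boundary. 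Moreover, among all $e' \neq e$ only $\bar e$, the environment realizing the outer max, contributes an actively violated constraint at $\beta_e$; every other environment is slack.

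This suggests the natural strategy: first project $\beta_e$ onto the single binding ball $\{v : \|v - \beta_{\bar e}\|_2 \le \alpha R\}$ and then confirm the result is also feasible for the remaining constraints. Projection of $\beta_e$ onto this one ball is a textbook one-dimensional calculation along the segment $[\beta_{\bar e}, \beta_e]$ and yields
\[
v^\star \;=\; \beta_{\bar e} + \frac{\alpha R}{\|\beta_e - \beta_{\bar e}\|_2}\bigl(\beta_e - \beta_{\bar e}\bigr) \;=\; \alpha\beta_e + (1-\alpha)\beta_{\bar e},
\]
using that $\|\beta_e - \beta_{\bar e}\|_2 = R$ by the definition of $\bar e$. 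Optimality with respect to the full intersection is then immediate: any $v$ strictly closer to $\beta_e$ than $v^\star$ would satisfy $\|v - \beta_{\bar e}\|_2 > \alpha R$ by the reverse triangle inequality and so violate membership in $\Upsilon_\alpha(\beta_e)$.

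The step I expect to be the main obstacle is verifying that $v^\star$ itself is feasible, i.e.\ that $\operatorname{dist}(v^\star, \Omega_{e'}(\phi)) \le \alpha R$ for every $e' \neq \bar e$. For $\bar e$ itself the distance equals $\alpha R$ exactly by construction; for the other environments I would expand $v^\star$ as a convex combination of $\beta_e$ and $\beta_{\bar e}$, apply the triangle inequality against a suitably chosen $\beta'_{e'} \in \Omega_{e'}(\phi)$, and invoke the maximality of $\bar e$ (so that $\min_{\beta'_{e'}}\|\beta_e - \beta'_{e'}\|_2 \le R$ uniformly) to bound the resulting two terms by $\alpha R$. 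This geometric check is where the definition of $\bar e$ as the farthest environment does the real work; once it is packaged as a short standalone step, combining it with the optimality argument of the previous paragraph closes the proof and concludes $v^\star = P_{\Upsilon_\alpha(\beta_e)}(\beta_e)$.
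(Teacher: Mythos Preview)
Your overall plan mirrors the paper's own argument almost exactly: both treat $\Upsilon_\alpha(\beta_e)$ as the intersection of balls of radius $\alpha R$ centered at the $\beta_{e'}$, project $\beta_e$ onto the ball around the farthest center $\beta_{\bar e}$, and then assert that this single-ball projection already lies in the full intersection. The paper's proof is only a geometric sketch that appeals to a figure and says the result ``can be easily verified''; you are more explicit in separating the optimality step (which is indeed immediate from the reverse triangle inequality once feasibility is known) from the feasibility step.

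However, the feasibility step you correctly flag as ``the main obstacle'' does not go through with the triangle-inequality argument you outline, and this is a genuine gap rather than a technicality. Expanding $v^\star=\alpha\beta_e+(1-\alpha)\beta_{\bar e}$ and applying the triangle inequality against any $\beta'_{e'}\in\Omega_{e'}(\phi)$ yields
\[
\|v^\star-\beta'_{e'}\|_2\;\le\;\alpha\|\beta_e-\beta'_{e'}\|_2+(1-\alpha)\|\beta_{\bar e}-\beta'_{e'}\|_2\;\le\;\alpha R+(1-\alpha)\|\beta_{\bar e}-\beta'_{e'}\|_2,
\]
and the maximality of $\bar e$ tells you nothing about $\|\beta_{\bar e}-\beta'_{e'}\|_2$; that term is not bounded by $0$ (or by anything useful) in general. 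Concretely, with singleton $\Omega_{e'}$'s, $\alpha=0.9$, $\beta_e=(0,0)$, $\beta_{\bar e}=(1,0)$, and $\beta_{e'}=(-0.1,0.9)$, one has $R=1$, $v^\star=(0.1,0)$, and $\|v^\star-\beta_{e'}\|_2=\sqrt{0.85}\approx 0.922>0.9=\alpha R$, so $v^\star\notin\Upsilon_\alpha(\beta_e)$ even though the intersection of the two balls is nonempty. In other words, without an additional geometric assumption on the configuration of the $\beta_{e'}$ (which neither the lemma statement nor the paper's sketch supplies), the linear interpolation need not be feasible, and hence need not be the Euclidean projection. Your write-up should either identify and state such an assumption explicitly, or present the interpolation as a heuristic approximation to the projection rather than as the exact projector.
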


\begin{proof}
    We give the proof in an intuitive way shown in Figure~\ref{fig:proof}. Specifically, the feasible region $\Upsilon_{\alpha}(\beta_e)$ can be regarded as an intersection of several hyper-spheres centered with all domain-wise live-vs-spoof hyperplanes $\beta_{e'}$. The radius is given by the $\alpha$ multiplying the distance to the farthest hyperplane $\beta_{\bar{e}}$. Therefore the Euclidean projection of $\beta_e$ to the feasible set simultaneously lies on the surface of the hypersphere and the line segments between $\beta_e$ and $\beta_{\bar{e}}$. It can be easily verified that $$P_{\Upsilon_{\alpha}(\beta_e)}(\beta_e) = \alpha
    \beta_e + (1 - \alpha)  \beta_{\bar{e}},$$ satisfies the given criteria.

\end{proof}

\begin{figure}[htb]
    \small\centering
    \includegraphics[width=.45\textwidth]{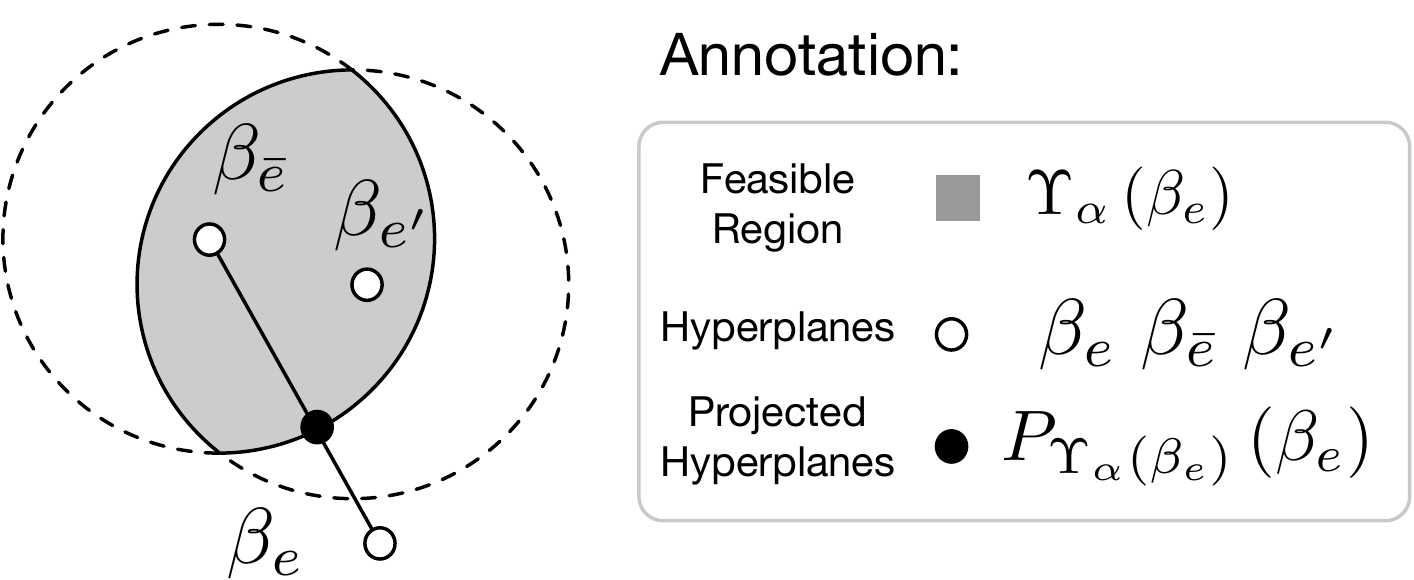}
    \caption{\small Illustration of the Euclidean projection results (solid black dot) to the feasible set $\Upsilon_\alpha\left(\beta_e\right)$. }
    \figvspace
    \label{fig:proof}
\end{figure}

\begin{algorithm}[t]
\begin{algorithmic}\State Initialize $\phi, \beta_{e^{(1)}}, ... , \beta_{e^{(E)}}$, learning rate $\gamma$, alignment parameter $\alpha$, alignment starting epoch $T_a$.
\For{$\text{t in 0, 1, ..., }$}
\State Run forward pass and calculate the gradient.
\For{$e \in \mathcal{E}$}
    
    \State $\tilde{\beta}^{t+1}_e = \beta^{t}_e - \gamma \nabla_{\beta^{t}_e} \mathcal{L}_{\textit{PG-IRM}}$
    \State $\alpha' := 1 - \mathbf{1}_{t > T_a} (1 - \alpha)$ 
    \State select $\beta^{t}_{\bar{e}}$ with $\bar{e}  = \underset{e' \in \mathcal{E}  \backslash e }{\text{argmax}} \|\tilde{\beta}^{t+1}_e - \beta^{t}_{e'}\|_2$
    \State $\beta^{t+1}_e = \alpha'
    \tilde{\beta}^{t+1}_e + (1 - \alpha')  \beta^t_{\bar{e}}$
\EndFor
\State Update $\phi^{t+1} = \phi^{t} - \gamma \nabla_{\phi^t} \mathcal{L}_{\textit{PG-IRM}}$.
\EndFor
\end{algorithmic}
\caption{\small PG-IRM}
\label{alg:proj_grad_appendix}
\end{algorithm}

% \begin{theorem*} 
% If the PG-IRM+ loss w.r.t network $f$ is Lipschitz continuously differentiable with constant L. Then the iterates generated by Alg.~\ref{alg:proj_grad_appendix} converge to a local minimizer at O(1/t).
% \end{theorem*}

\paragraph{Main results.} When we have the projected form on the constraint set, deriving the optimization strategy is thus straightforward. As shown in Alg.~\ref{alg:proj_grad_appendix}, we first calculate the gradient of  hyperplanes for all domains
$$\tilde{\beta}^{t+1}_e = \beta^{t}_e - \gamma \nabla_{\beta^{t}_e} \mathcal{L}_{\textit{PG-IRM}}.$$ We then select the farthest domain-wise hyperplanes $\beta_{\bar{e}}$ from other environments. The final projection results are thus given by $$\beta_e^{t+1}=\alpha^{\prime} \beta_e^{t+1}+\left(1-\alpha^{\prime}\right) \beta_{\bar{e}},$$ as we demonstrated in Lemma~\ref{lemma:linear_interp}.

\paragraph{Remark on the $T_a$.} In the first $T_a$ epochs, we let the feature encoder $\phi$ and domain-wise hyperplanes $\beta_e$ trained in a standard way. The goal is to ensure that the hyperplanes $\beta_e$ will reach or be close to the minimum of the domain-wise empirical risk, and we have:
$$\beta_e \in \Omega_e(\phi).$$ In Alg.~\ref{alg:proj_grad_appendix}, we use an additional parameter $\alpha'$ to manifest this procedure:
$$\alpha^{\prime}:=1-\mathbf{1}_{t>T_a}(1-\alpha)$$
Specifically, when $t<T_a$, $\alpha'=1$, which means the original gradient descent algorithm is applied. When $t<T_a$, $alpha'=\alpha$, the projected gradient descent takes charge.

% We then select the 
%     \State select $\beta^{t}_{\bar{e}}$ with $\bar{e}  = \underset{e' \in \mathcal{E}  \backslash e }{\text{argmax}} \|\tilde{\beta}^{t+1}_e - \beta^{t}_{e'}\|_2$
%     \State $\beta^{t+1}_e = \alpha
%     \tilde{\beta}^{t+1}_e + (1 - \alpha)  \beta_{\bar{e}}$

\section{Why do we need a fair setting?}
\label{sec:whyfairsetting}

By visualizing the line plot of the HTER performance over 100 training epochs in Fig.~\ref{fig:test_curve}, we realize the test performance on the unseen domain is highly testset-dependent and unstable especially in the early epochs. Therefore, the best number reported commonly adopted in existing literature~\cite{wang2022patchnet, jia2020ssdg, wang2022ssan} usually happens in an unpredictable earlier epoch. Such ``best'' snapshot is also hard to be selected by validation strategy because we have zero information regarding the test domain. As an alternative, we noticed that the test performance is more stable in the last 10 epochs upon convergence, which motivates us to propose using a fairer comparison strategy introduced in Section~\ref{sec:exp}.

\begin{figure}[htb]
    \centering
    \includegraphics[width=0.95\linewidth]{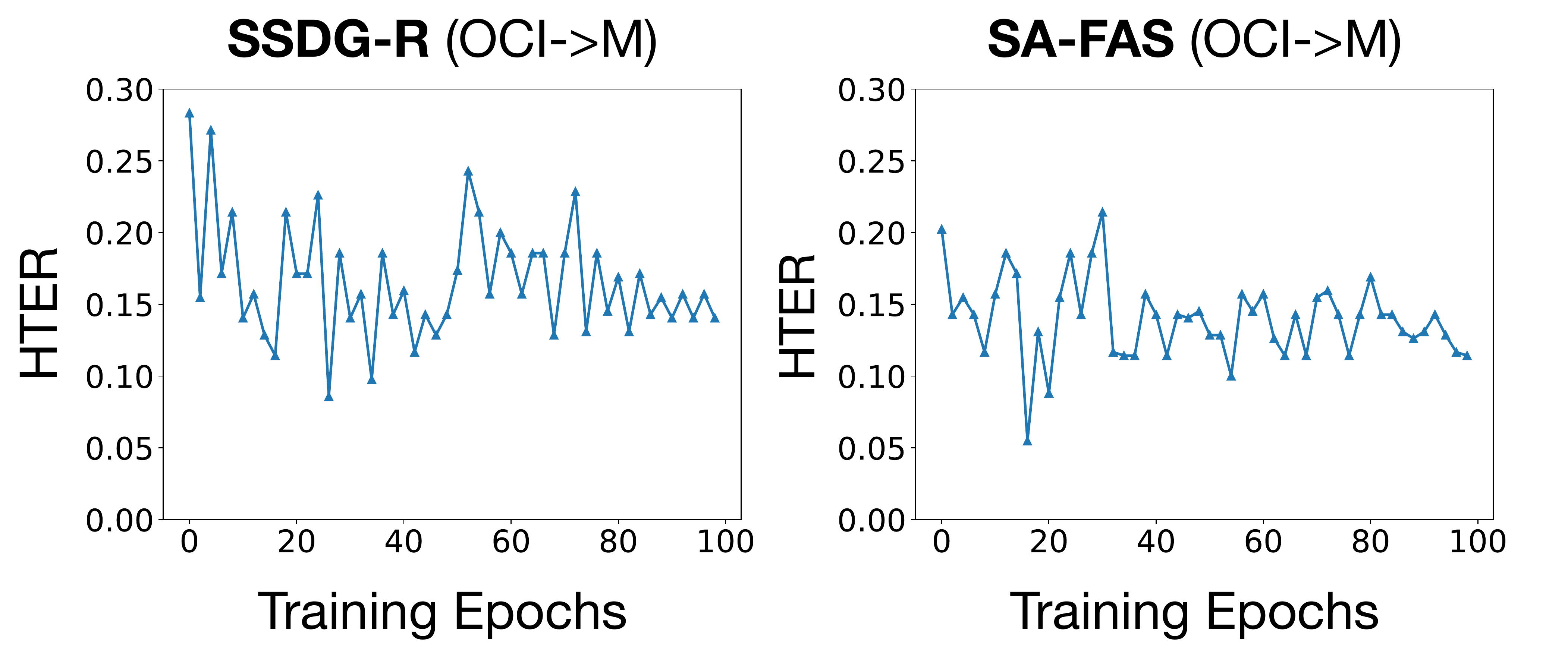}
    \caption{The line plot of the HTER performance tested on \texttt{MSU} dataset when trained on \texttt{CASIA}, \texttt{Replay} and \texttt{OULU} with SSDG-R~\cite{jia2020ssdg} and SA-FAS  over 100 training epochs. }
    \label{fig:test_curve}
\end{figure}

\section{Convergence of PG-IRM}
\label{sec:cosine_curve}

Recall that in PG-IRM, we optimize multiple linear classifiers simultaneously $\beta_{e^{(1)}}, \beta_{e^{(2)}}, \beta_{e^{(3)}}$ and gradually align them during training. In this section, we would like to verify if PG-IRM indeed regularizes domain classifiers to be close to each other and finally converges to the same one $\beta^*=\beta_{e^{(1)}}=\beta_{e^{(2)}}=\beta_{e^{(3)}}$. Empirically, we use the averaged cosine distance between domain classifiers to measure the distance between them: 

$$S_{\text{cos}}= \mathbb{E}_{e,e'\in \mathcal{E}, e \neq e'}[cos(\beta_e,\beta_{e'})]$$

As shown in Fig.~\ref{fig:consine_curve}, the averaged cosine value between domain classifiers diminishes gradually and finally converges to 1, which suggests that they converge to a $\beta^*$ that is aligned for all domains. 

\begin{figure}[htb]
    \centering
    \includegraphics[width=0.8\linewidth]{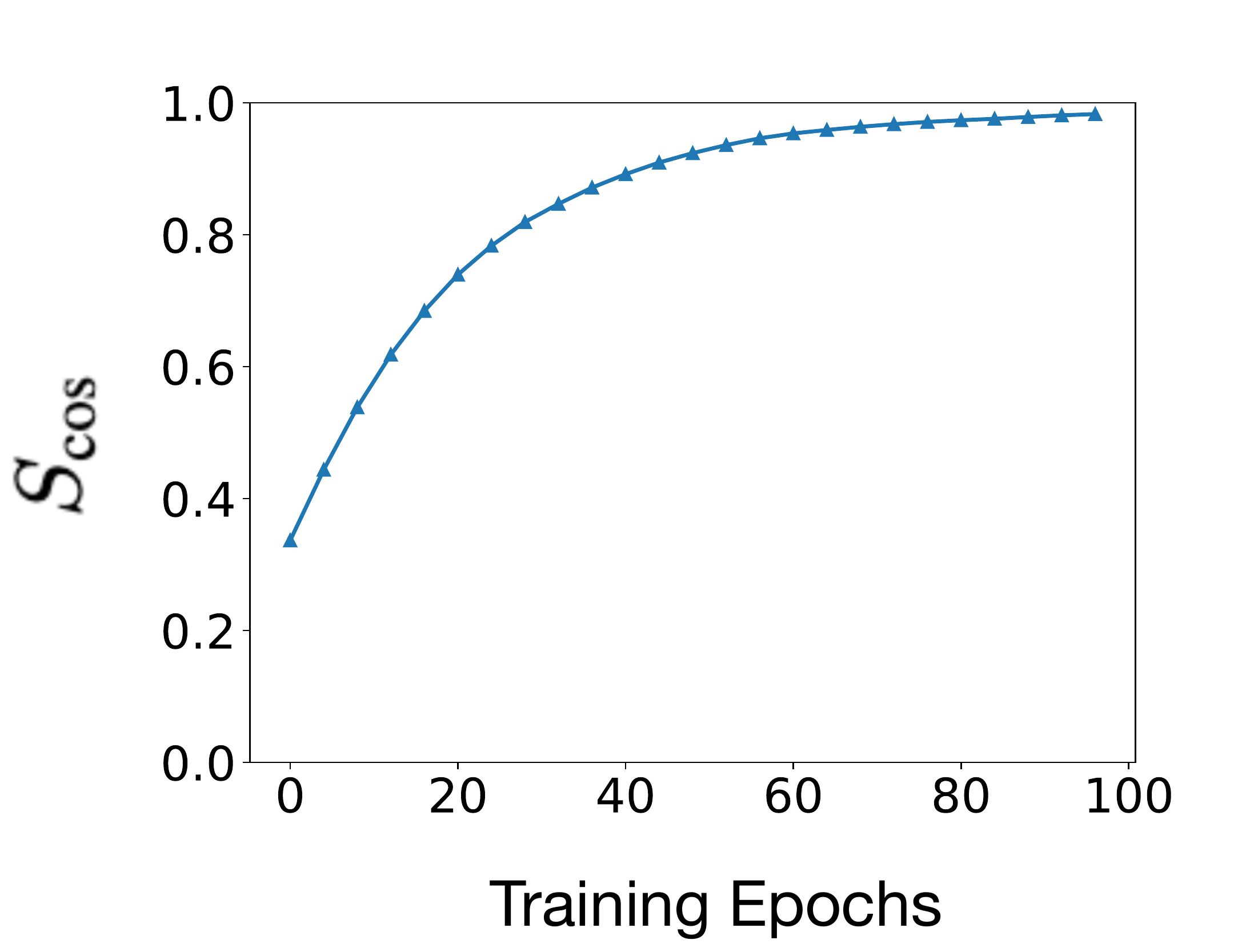}
    \caption{The line plot of the $S_{\text{cos}}$ when trained on \texttt{CASIA}, \texttt{Replay} and \texttt{OULU} with PG-IRM over 100 training epochs. }
    \label{fig:consine_curve}
\end{figure}

\section{Sensitivity Analysis}
\label{sec:sensitivity}

In this section, we perform the sensitivity analysis of hyper-parameter settings for SA-FAS in Fig.~\ref{fig:hyper}. The performance comparison in the bar plot for each hyper-parameter is reported by fixing other hyper-parameters. In the figure, we observe that the performance of SA-FAS is less sensitive to the learning rate and the alignment starting epoch compared with the maximum gap of $1.2\%$ in the given range. We also notice that choosing the right alignment parameter $\alpha$ is more important, since a proper $\alpha$ ensures the domain-wise decision boundaries are aligned not too fast and not too slow. In the extreme case, if $\alpha=0$, it degenerates to the ERM after epoch $T_a$ and if $\alpha=1$, the domain-wise boundaries will never get aligned with each other. In summary, our algorithm does not require heavy hyper-parameter tuning as long as it falls into a reasonable range.

\begin{figure*}[htb]
    \centering
    \includegraphics[width=1\linewidth]{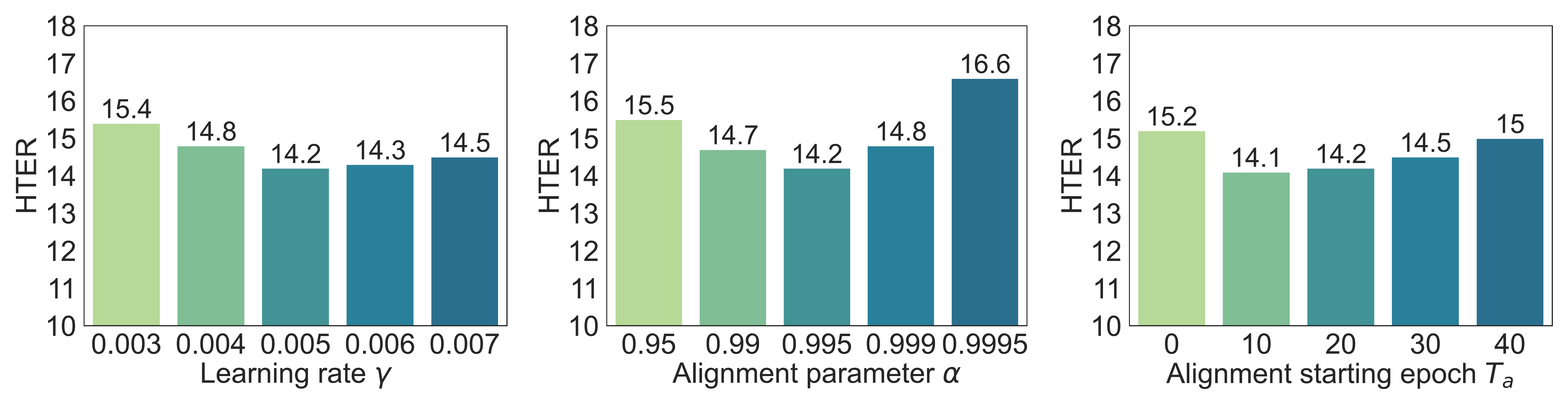}
    \caption{Sensitivity analysis of hyper-parameters: learning rate $\gamma$, alignment parameter $\alpha$, alignment starting epoch $T_a$. The HTER is reported on the mean performance based on the last 10 epochs.  The middle bar in each plot corresponds to the hyperparameter value used in our main experiments.  }
    \label{fig:hyper}
\end{figure*}

\section{Limitation}
\label{sec:limit}
Our work has two limitations. Firstly, our framework assumes the dataset collected from each domain contains both live and spoof data. For example, SA-FAS can not handle the  training data  with  live samples only from domain A and spoof samples only from domain B. Secondly, SA-FAS may cause extra computation costs when the domain amount is very large since we set up one hyperplane for each domain.

\begin{figure*}[t]
    \small\centering
    \includegraphics[width=\textwidth]{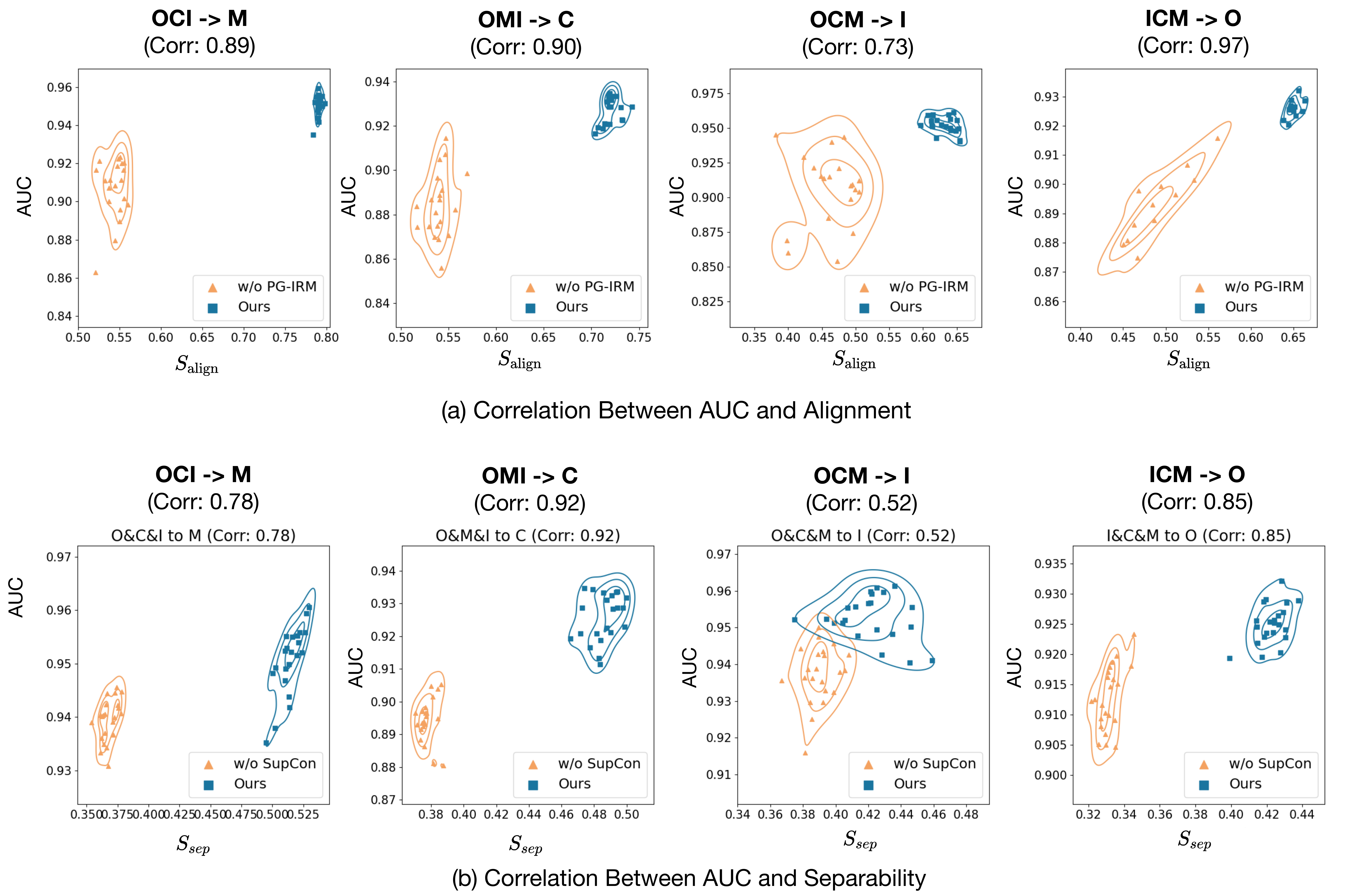}
    \caption{\small Correlation between the test performance AUC and two
properties measure. Each dot represents one snap-shot during the training stage in four cross-domain settings.}
\label{fig:more_corr}
\end{figure*}

\end{document}